\theoremstyle{plain}
\newtheorem{theorem}{Theorem}[section]
\newtheorem{lemma}[theorem]{Lemma}
\newtheorem{corollary}[theorem]{Corollary}
\theoremstyle{definition}
\newtheorem{definition}[theorem]{Definition}
\theoremstyle{remark}
\newtheorem{remark}[theorem]{Remark}
\renewcommand{\@toptitlebar}{}
\renewcommand{\@bottomtitlebar}{}
\title{Constructive Approximation of Random Process via Stochastic Interpolation Neural Network Operators}
\author{
  Sachin Saini \qquad Uaday Singh
  \\[1ex]
  Department of Mathematics\\
  Indian Institute of Technology Roorkee\\
  Roorkee, 247667, India
  \\[1ex]
  \href{mailto:sachin_saini@ma.iitr.ac.in}{\texttt{sachin\_saini@ma.iitr.ac.in}},
  \href{mailto:uaday.singh@ma.iitr.ac.in}{\texttt{uaday.singh@ma.iitr.ac.in}}
}
\begin{document}
\maketitle
\begin{abstract}
In this paper, we construct a class of stochastic interpolation neural network operators (SINNOs) with random coefficients activated by sigmoidal functions. We establish their boundedness, interpolation accuracy, and approximation capabilities in the mean square sense, in probability, as well as path-wise within the space of second-order stochastic (random) processes \( L^2(\Omega, \mathcal{F},\mathbb{P}) \). Additionally, we provide quantitative error estimates using the modulus of continuity of the processes. These results highlight the effectiveness of SINNOs for approximating stochastic processes with potential applications in COVID-19 case prediction.
\end{abstract}

\keywords{Sigmoidal function, Neural network operators,  Stochastic interpolation, Mean square approximation, Uniform approximation.\\
\textbf{MSC Classification: 41A05, 41A25, 41A28, 47A58, 60H35, 82C32.}
}

\section{Introduction}
In recent years, artificial neural networks have successfully advanced in function approximation, pattern recognition, and machine learning.  
Research by Cybenko \cite{cybenko1989approximation} and Funahashi \cite{funahashi1989approximate} demonstrated that single-layer feed-forward neural networks (FNNs) can approximate continuous deterministic functions with arbitrary precision, provided they have a sufficient number of hidden neurons.  
These networks are now widely applied in computer science, biology, mathematics, physics, and engineering.  
Typically, an FNNs with one hidden layer can be mathematically expressed as
\begin{equation}\label{eq.1}
     \mathbf{N}_n(\mathbf{t})=\sum\limits_{k=0}^{n}{c_k}\,\eta\left(\mathbf{a}_k\cdot\mathbf{t}+b_k\right),
\end{equation}
where $\mathbf{t} = (t_1, t_2, \ldots, t_d)$, $\mathbf{a}_k \in \mathbb{R}^d$, $c_k, b_k \in \mathbb{R}$, and $\eta$ denotes the activation function.

The theory of \emph{neural network operators} (NNOs) provides a constructive framework for approximating functions using FNNs.  
Cardaliaguet and Euvrard \cite{cardaliaguet1992approximation} were the first to distinguish two types of NNOs: bell-shaped and squashing-type.  
These operators offer explicit formulations for approximating a function and its derivatives through a single-hidden-layer neural network using suitable activation functions.  
Later, Anastassiou \cite{anastassiou1997rate, anastassiou2000rate} investigated the convergence rates of such operators and further extended the analysis for various activation functions \cite{anastassiou2011multivariatesigmoidal, anastassiou2011sig-univariate, anastassiou2011hperbolioc-univariate, anastassiou2011multivariate-hyper, anastassiou2013frac-normal, anastassiou2013multi-vari-rate}.  

Deterministic interpolation-type NNOs activated by the ramp function were first defined by Costarelli \cite{costarelli2014interpolation-ramp, costarelli2015interpolation-ramp-multi} as
\begin{align}
    F_n(f,t)= \frac{\sum\limits_{k=0}^{n}f(t_k)\varphi_R\left(\frac{n(t-t_k)}{b-a}\right)}{\sum\limits_{j=0}^{n}\varphi_R\left(\frac{n(t-t_j)}{b-a}\right)}, \quad t\in [a,b],
\end{align}
where the nodes $t_k$ are uniformly spaced as $t_k=a+k\delta$ with $\delta=\tfrac{b-a}{n}$.  
Subsequently, Qian et al. \cite{qian2022rates-interpolation-A(m)-class} generalized this construction for a broader class of activation functions $\eta$, defining
\begin{align}
    S_n(f, t) = \sum_{k=0}^{n} f(t_k)\, \varphi_{\mathcal{A}(m)}\!\left(\frac{2m}{\delta}\left(t - t_k\right)\right), \quad t \in [a, b],
\end{align}
where $\varphi_{\mathcal{A}(m)}(t)=\eta(t+m)-\eta(t-m)$.  
Further developments include multivariate extensions \cite{wang2023Multi-INNOS-on-A(m)class,baxhaku2025multivariate,agrawal2025neural} and irregular grid formulations \cite{sharma2023approximation-irregular,costarelli2025higher,sharma2023fractional}and in fuzzy settings extension \cite{kadak2022multivariate}.   

\medskip
Despite their theoretical richness, most NNOs remain deterministic.  
However, many real-world signals and physical processes are inherently stochastic, with uncertainty arising from environmental noise or intrinsic randomness rather than measurement error.  
To address such cases, neural networks must be capable of learning mappings involving random functions and probabilistic dependencies.  
This perspective has motivated the development of \emph{stochastic neural networks} (SNNs), such as Boltzmann-type models introduced by Amari et al. \cite{amari1992information-boltezman} and recurrent stochastic networks explored by Zhao et al. \cite{zhao1996recurrent-SNN}.  
Belli et al. \cite{belli1999artificialNN} later established that SNNs can approximate stochastic processes in the mean square sense.  
Makovoz \cite{makovoz1996random-APP.with-NN} and Anastassiou \cite{anastassiou2022brownian-app.byNNO, anastassiou2024brownian-timeseprating, anastassiou2023approximation-TIMEsepatingNNOs-MDPI, anastassiou2023neural-Time-sepratingNNO} subsequently extended NNOs to random and time-separating processes.

Recent research trends also emphasize stochasticity and uncertainty quantification in neural network models.  
For instance, Gal and Ghahramani’s stochastic dropout framework has inspired further exploration of test-time uncertainty modeling through random weight injection \cite{ledda2023dropout}, while Durrmeyer-type deep neural networks \cite{kadak2025durrmeyer} established a rigorous connection between classical integral operators and neural architectures.  
Similarly, neural approaches have been utilized for modeling stochastic systems with jump dynamics \cite{ren2020modeling}, and fuzzy-stochastic hybrids have been developed for uncertainty-aware process modeling \cite{yuan2024interval}.  
From a theoretical standpoint, the expressive power of stochastic neural architectures has been analyzed using refined Kolmogorov complexity measures \cite{cabessa2025refined}, providing formal insight into their representational capabilities.  

\medskip
Motivated by these developments, in this paper we introduce a new class of \emph{stochastic interpolation neural network operators} (SINNOs)  
 which generalize traditional deterministic NNOs to the stochastic setting, providing an operator-based framework for approximating second-order processes \( X_t(\omega)\in L^2(\Omega,\mathcal{F},\mathbb{P}) \).  
The proposed SINNOs integrate stochasticity at the operator level through random coefficients \( X_{t_k}(\omega) \) while preserving the analytical and structural properties of deterministic interpolation operators.  
From an application standpoint, these operators effectively handle real-world signals, as demonstrated through their implementation on both simulated stochastic processes and real COVID-19 case data.

\section{Preliminaries}
Let $(\Omega, \mathcal{F}, \mathbb{P})$ be a probability space and let $\mathcal{T} = [0, T]$.
A \emph{stochastic process} is a family $(X_t)_{t \in \mathcal{T}}$ of random variables 
$X_t : (\Omega, \mathcal{F}) \to \mathbb{R}$. 
We say that $X_t$ is \emph{square-integrable} if $X_t \in L^2(\Omega, \mathcal{F}, \mathbb{P})$, i.e.,
\[
\mathbb{E}[|X_t|] < \infty, \qquad\mathbb{E}[|X_t|^2] < \infty.
\]
Throughout, we assume that for each $t \in \mathcal{T},$  $X_t \in L^2(\Omega, \mathcal{F}, \mathbb{P})$.
Here, $\Omega$ denotes the sample space, $\mathcal{F}$ is a $\sigma$-algebra of events, and $\mathbb{P}$ is a probability measure on $\mathcal{F}$.
The symbol $\mathbb{E}(\cdot)$ denotes the mathematical expectation.
See~\cite{doob1990stochastic} for foundational details on stochastic processes.

\begin{definition}
A measurable function $\eta : \mathbb{R} \to \mathbb{R}$ is called a \emph{sigmoidal function} if
\[
\lim_{t \to -\infty} \eta(t) = 0 
\quad \text{and} \quad 
\lim_{t \to +\infty} \eta(t) = 1.
\]
\end{definition}

\begin{definition}
Let $m > 0$ be fixed. 
A sigmoidal function $\eta$ is said to belong to the class $\mathcal{A}(m)$ if the following conditions hold:
\begin{enumerate}
    \item[$\mathcal{A}_1:$] $\eta(t)$ is non-decreasing;
    \item[$\mathcal{A}_2:$] $\eta(t)=1$ for $t \geq m$ and $\eta(t)=0$ for $t \leq -m$.
\end{enumerate}
\end{definition}

For $\eta \in \mathcal{A}(m)$, we define the corresponding \emph{activation function} by
\[
\varphi_{\mathcal{A}(m)}(t) = \eta(t+m) - \eta(t-m), \qquad t \in \mathbb{R}.
\]

\begin{remark}
Some examples of sigmoidal functions satisfying the above conditions are as follows:
\begin{enumerate}
    \item[(i)] \textbf{Ramp function:}
    \[
    \eta_R(t) =
    \begin{cases}
    0, & t \leq -\frac{1}{2},\\[4pt]
    t + \frac{1}{2}, & -\frac{1}{2} < t < \frac{1}{2},\\[4pt]
    1, & t \geq \frac{1}{2}.
    \end{cases}
    \]
    It is easy to verify that $\eta_R \in \mathcal{A}(1/2).$
    
    \item[(ii)] \textbf{Sigmoidal functions generated by central $B$-splines:}  
    The $B$-spline of order $r$ is defined by
    \[
    M_r(t) = \frac{1}{(r-1)!} 
    \sum_{j=0}^{r} (-1)^j \binom{r}{j} 
    \left(\frac{r}{2} + t - j\right)_{+}^{r-1},
    \]
    where $(t)_{+} := \max\{t, 0\}$ denotes the positive part of $t \in \mathbb{R}$.  
    The associated sigmoidal function is defined by
    \[
    \eta_{M(r)}(t) := \int_{-\infty}^{t} M_r(y) \, dy, \qquad t \in \mathbb{R}.
    \]
    Since $\operatorname{supp}(M_r) \subset [-r/2, r/2]$ and $M_r \ge 0$, 
    it follows that $\eta_{M(r)}(t) = 0$ for $t \le -r/2$ and 
    $\eta_{M(r)}(t) = 1$ for $t \ge r/2$. 
    Therefore, $\eta_{M(r)} \in \mathcal{A}(r/2).$
\end{enumerate}
\end{remark}

\begin{figure}[H]
    \centering
    \includegraphics[width=1\linewidth]{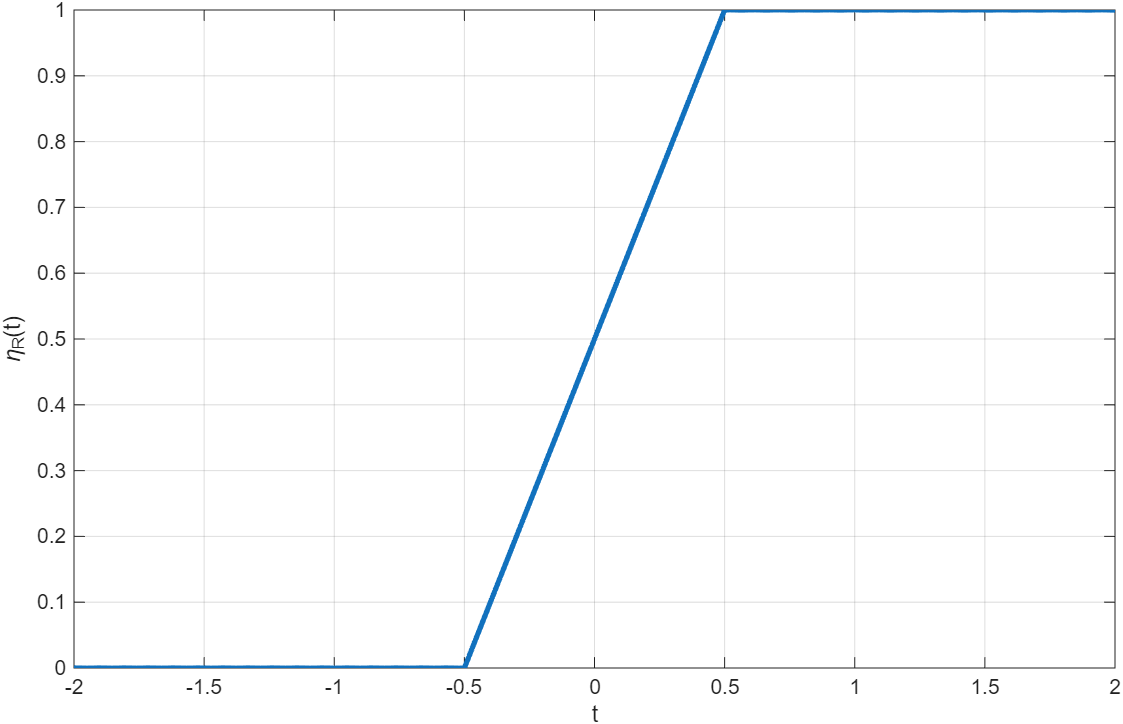}
    \caption{Sigmoidal ramp function $\eta_R$.}
    \label{ramp-plot}
\end{figure}

\begin{figure}[H]
    \centering
    \includegraphics[width=1\linewidth]{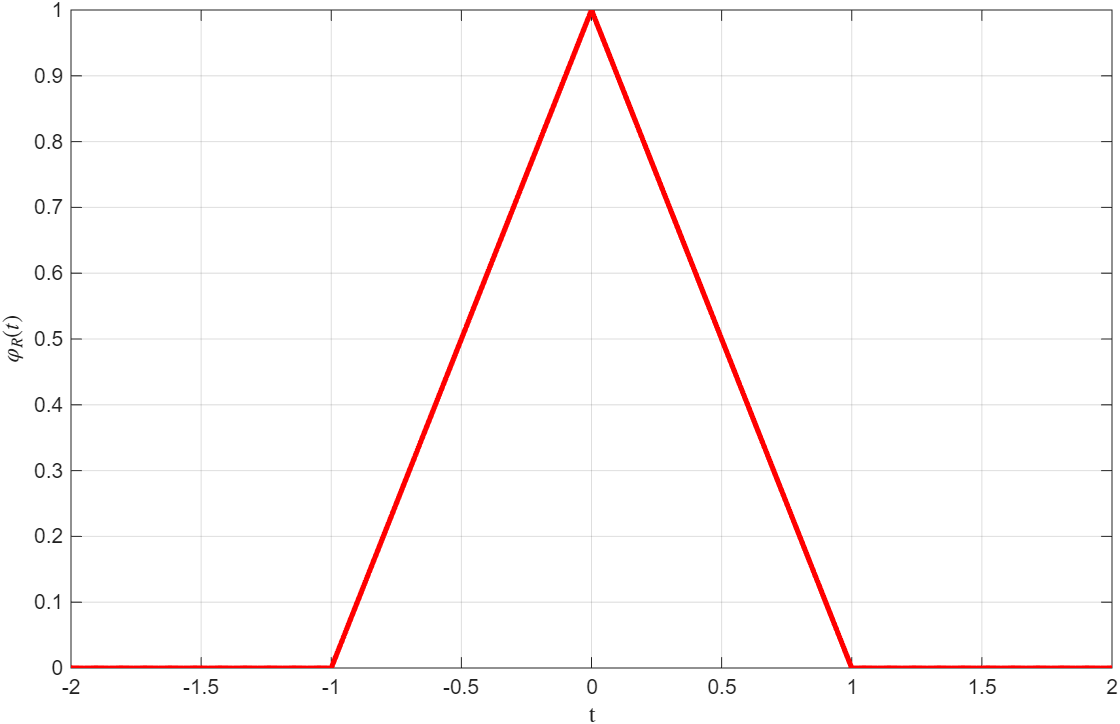}
    \caption{Activation function $\varphi_R$ corresponding to the ramp function $\eta_R$.}
    \label{Activation-plot}
\end{figure}

\noindent

\begin{definition}[Discrete absolute moments of order $\alpha$]
For $\alpha \ge 0$, the discrete absolute moments of order $\alpha$ for the activation function 
$\varphi_{\mathcal{A}(m)}$ are defined by
\[
\mathcal{M}_{\alpha}(\varphi_{\mathcal{A}(m)}) 
= \sup_{t \in \mathbb{R}} 
\sum_{k \in \mathbb{Z}} 
\big|\varphi_{\mathcal{A}(m)}(t - t_{k})\big| \, |t - t_{k}|^{\alpha}.
\]
\end{definition}

We summarize below some useful properties of $\varphi_{\mathcal{A}(m)}(t)$. 
See~\cite{qian2022rates-interpolation-A(m)-class, wang2023neural-interpolation-specific-sigmoidal} 
for detailed proofs.

\begin{lemma}\label{Lemma-properties-density}
If $\eta \in \mathcal{A}(m)$, then the activation function $\varphi_{\mathcal{A}(m)}$ satisfies:
\begin{enumerate}
    \item[\textnormal{(M1)}] $\varphi_{\mathcal{A}(m)}(t) \ge 0$ for all $t \in \mathbb{R}$;
    \item[\textnormal{(M2)}] $\varphi_{\mathcal{A}(m)}(t)$ is non-decreasing for $t < 0$ and non-increasing for $t \ge 0$;
    \item[\textnormal{(M3)}] $\operatorname{supp}(\varphi_{\mathcal{A}(m)}) \subseteq [-2m, 2m]$;
    \item[\textnormal{(M4)}] $\varphi_{\mathcal{A}(m)}(t) + \varphi_{\mathcal{A}(m)}(t - 2m) = 1$ for $t \in [0, 2m]$;
    \item[\textnormal{(M5)}] $\mathcal{M}_0(\varphi_{\mathcal{A}(m)}) < \infty$.
\end{enumerate}
\end{lemma}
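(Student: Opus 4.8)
The plan is to establish the five properties (M1)--(M5) sequentially, since each builds naturally on the previous ones and all follow directly from the defining relation $\varphi_{\mathcal{A}(m)}(t) = \eta(t+m) - \eta(t-m)$ together with the two axioms $\mathcal{A}_1$ (monotonicity) and $\mathcal{A}_2$ (the flat tails $\eta \equiv 0$ below $-m$ and $\eta \equiv 1$ above $m$). First, for (M1) I would observe that since $t+m > t-m$ and $\eta$ is non-decreasing by $\mathcal{A}_1$, we immediately get $\eta(t+m) \ge \eta(t-m)$, hence $\varphi_{\mathcal{A}(m)}(t) \ge 0$ for all $t$.

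For (M3), I would use $\mathcal{A}_2$ directly: when $t \ge 2m$ we have $t-m \ge m$, so $\eta(t-m)=1$, while $t+m \ge m$ forces $\eta(t+m)=1$ as well, giving $\varphi_{\mathcal{A}(m)}(t)=0$; symmetrically, when $t \le -2m$ both $t\pm m \le -m$, so both terms vanish. This pins the support inside $[-2m, 2m]$. Property (M4) is an algebraic identity: writing out $\varphi_{\mathcal{A}(m)}(t) + \varphi_{\mathcal{A}(m)}(t-2m) = \big(\eta(t+m)-\eta(t-m)\big) + \big(\eta(t-m)-\eta(t-3m)\big)$, the middle $\eta(t-m)$ terms cancel telescopically, leaving $\eta(t+m) - \eta(t-3m)$; for $t \in [0,2m]$ we have $t+m \ge m$ so $\eta(t+m)=1$, and $t-3m \le -m$ so $\eta(t-3m)=0$, yielding the value $1$.

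The main subtlety is (M2), the unimodality (non-decreasing on $t<0$, non-increasing on $t\ge 0$). Here I cannot merely invoke monotonicity of $\eta$, because $\varphi_{\mathcal{A}(m)}$ is a \emph{difference} of two shifted copies of $\eta$, and the sign of its increments depends on comparing increments of $\eta$ at the two shifted arguments rather than at a single point. My plan is to argue via the symmetry considerations and the flat-tail structure: on the region where $t < 0$, I would show the relevant shifted argument $t+m$ still lies in the active (increasing) part while $t-m$ has already saturated to $0$ (for $t \le 0$ one has $t - m \le -m$, so $\eta(t-m)=0$ and $\varphi_{\mathcal{A}(m)}(t)=\eta(t+m)$, which is non-decreasing by $\mathcal{A}_1$); symmetrically, for $t \ge 0$ one has $t+m \ge m$, so $\eta(t+m)=1$ and $\varphi_{\mathcal{A}(m)}(t)=1-\eta(t-m)$, which is non-increasing. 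Thus the unimodality reduces to monotonicity of a single $\eta$-term on each side, and this decomposition is the crux of the argument.

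Finally, (M5) follows from (M1)--(M4): by the compact support (M3), for any fixed $t$ at most a bounded number of shifted terms $\varphi_{\mathcal{A}(m)}(t-t_k)$ are nonzero when the nodes $t_k$ are spaced on a grid, and the partition-of-unity relation (M4) together with nonnegativity (M1) bounds the sum $\sum_{k}\varphi_{\mathcal{A}(m)}(t-t_k)$ uniformly, giving $\mathcal{M}_0(\varphi_{\mathcal{A}(m)}) < \infty$. I would close by noting that detailed verifications appear in the cited references \cite{qian2022rates-interpolation-A(m)-class, wang2023neural-interpolation-specific-sigmoidal}, so the argument here is a self-contained sketch.
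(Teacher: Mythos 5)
Your proof is correct in all five parts. Note that the paper itself does not prove this lemma at all---it defers the verification to the cited references \cite{qian2022rates-interpolation-A(m)-class, wang2023neural-interpolation-specific-sigmoidal}---so the only in-paper argument to compare against is the quantitative version of (M5) proved in the lemma that follows, and there your reasoning coincides with the paper's: by (M3), $\varphi_{\mathcal{A}(m)}(t-k)\neq 0$ forces $|t-k|\le 2m$, so at most $\lfloor 4m\rfloor+2$ grid points contribute, and each term is bounded by $1$, giving $\mathcal{M}_0(\varphi_{\mathcal{A}(m)})\le \lfloor 4m\rfloor+2<\infty$. You also correctly identified the one genuinely delicate point, (M2), and resolved it the standard way: for $t\le 0$ one has $t-m\le -m$, so $\eta(t-m)=0$ by $\mathcal{A}_2$ and $\varphi_{\mathcal{A}(m)}(t)=\eta(t+m)$ is non-decreasing, while for $t\ge 0$ one has $t+m\ge m$, so $\varphi_{\mathcal{A}(m)}(t)=1-\eta(t-m)$ is non-increasing; the saturation of one $\eta$-term on each half-line is exactly what makes the difference of shifted sigmoids unimodal. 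One small simplification worth noting: in (M5) the appeal to (M4) is unnecessary, since $0\le\varphi_{\mathcal{A}(m)}\le 1$ already follows from $\mathcal{A}_1$ and $\mathcal{A}_2$ alone (monotonicity gives $0=\eta(-m)\le\eta(s)\le\eta(m)=1$, hence $\varphi_{\mathcal{A}(m)}(t)\le\eta(t+m)\le 1$); that said, your route---(M1) and (M4) give $\varphi_{\mathcal{A}(m)}\le 1$ on $[-2m,2m]$, and (M3) extends this to all of $\mathbb{R}$---is also valid, merely a slight detour.
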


\begin{lemma}[M5]\label{lem:discrete-moments-finite}
Let $\eta \in \mathcal{A}(m)$. 
For every $\alpha \ge 0$, the discrete absolute moment 
$\mathcal{M}_{\alpha}(\varphi_{\mathcal{A}(m)})$ is finite and satisfies
\[
\mathcal{M}_{\alpha}(\varphi_{\mathcal{A}(m)})
\le (2m)^{\alpha}\, \big(\lfloor 4m \rfloor + 2\big).
\]
In particular, $\mathcal{M}_{0}(\varphi_{\mathcal{A}(m)}) 
\le \lfloor 4m \rfloor + 2 < \infty$, which verifies \textnormal{(M5)}.
\end{lemma}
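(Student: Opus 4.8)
The plan is to exploit the compact support of $\varphi_{\mathcal{A}(m)}$ to collapse the series into a finite sum, bound each surviving term by a single constant, and then estimate how many terms can be nonzero. The whole argument is a truncation-plus-counting estimate, uniform in $t$.

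First I would record two elementary pointwise bounds on $\varphi_{\mathcal{A}(m)}$. Because $\eta$ is non-decreasing with $\lim_{t\to-\infty}\eta(t)=0$ and $\lim_{t\to+\infty}\eta(t)=1$, it satisfies $0 \le \eta \le 1$ on all of $\mathbb{R}$; hence $\varphi_{\mathcal{A}(m)}(s) = \eta(s+m)-\eta(s-m)$ obeys $0 \le \varphi_{\mathcal{A}(m)}(s)\le \eta(s+m)\le 1$ for every $s\in\mathbb{R}$. Moreover, property (M3) gives $\operatorname{supp}(\varphi_{\mathcal{A}(m)})\subseteq[-2m,2m]$, so that $\varphi_{\mathcal{A}(m)}(s)=0$ whenever $|s|>2m$.

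Next, fix $t\in\mathbb{R}$ and look at the sum $\sum_{k\in\mathbb{Z}}|\varphi_{\mathcal{A}(m)}(t-t_k)|\,|t-t_k|^{\alpha}$ over the unit-spaced nodes $t_k=k$. By the support property a summand is nonzero only when $|t-t_k|\le 2m$, i.e.\ only for integers $k$ lying in the interval $[t-2m,\,t+2m]$. For each such surviving index one has simultaneously $|\varphi_{\mathcal{A}(m)}(t-t_k)|\le 1$ and $|t-t_k|^{\alpha}\le (2m)^{\alpha}$, so every nonzero summand is bounded by $(2m)^{\alpha}$.

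It then remains to count the surviving indices, which is the only delicate point: the integers contained in a closed interval of length $4m$ number at most $\lfloor 4m\rfloor+1$ (if $k_1<\dots<k_N$ are the consecutive integers in the interval, then $N-1=k_N-k_1\le 4m$), and this is comfortably dominated by $\lfloor 4m\rfloor+2$. Combining the per-term bound with this count gives $\sum_{k\in\mathbb{Z}}|\varphi_{\mathcal{A}(m)}(t-t_k)|\,|t-t_k|^{\alpha}\le (2m)^{\alpha}\,(\lfloor 4m\rfloor+2)$ for every $t$; taking the supremum over $t\in\mathbb{R}$ yields the asserted bound on $\mathcal{M}_{\alpha}(\varphi_{\mathcal{A}(m)})$, and the choice $\alpha=0$ gives $\mathcal{M}_{0}(\varphi_{\mathcal{A}(m)})\le \lfloor 4m\rfloor+2<\infty$, verifying (M5). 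No real obstacle arises here; the only care needed is the integer-counting step and ensuring the estimate is uniform in $t$ before passing to the supremum.
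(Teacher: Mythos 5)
Your proposal is correct and follows essentially the same truncation-plus-counting argument as the paper: use (M3) to restrict to integers $k \in [t-2m,\,t+2m]$, bound each surviving term by $(2m)^{\alpha}$ via $0 \le \varphi_{\mathcal{A}(m)} \le 1$, count the surviving indices, and take the supremum over $t$. Your integer count of $\lfloor 4m\rfloor + 1$ is in fact marginally sharper than the paper's $\lfloor 4m\rfloor + 2$, but both justify the stated bound.
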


\begin{proof}
By (M3), $\operatorname{supp}(\varphi_{\mathcal{A}(m)}) \subset [-2m, 2m]$.
So, for any fixed $t \in \mathbb{R}$,
\[
\varphi_{\mathcal{A}(m)}(t - k) \neq 0 
\quad \Longrightarrow \quad |t - k| \le 2m.
\]
Thus, only integers $k$ in the interval $[t - 2m,\, t + 2m]$ contribute to the sum.
The number of such integers is at most $\lfloor 4m \rfloor + 2$.
Using $0 \le \varphi_{\mathcal{A}(m)}(t) \le 1$, we have
\begin{align*}
 \sum_{k \in \mathbb{Z}} 
|\varphi_{\mathcal{A}(m)}(t - k)| \, |t - k|^{\alpha}
&\le
\sum_{\substack{k \in \mathbb{Z} \\ |t - k| \le 2m}} |t - k|^{\alpha}&&\\ 
&\le (2m)^{\alpha} (\lfloor 4m \rfloor + 2).   
\end{align*}

Taking the supremum over $t \in \mathbb{R}$ yields the stated bound.
\end{proof}

\begin{definition}[Mean Square Convergence]\label{def:mean-square-convergence}
Let $\{X_n(t,\omega)\}_{n\in\mathbb{N}}$ be a sequence of stochastic processes and let $X(t,\omega)$ be another stochastic process defined on the same probability space $(\Omega, \mathcal{F}, \mathbb{P})$. 
We say that $X_n$ \emph{converges to} $X$ \emph{in the mean square sense} (or \emph{in $L^2$}) if, for every fixed $t$ in the domain of definition,
\[
\lim_{n \to \infty} 
\mathbb{E}\big[\,|X_n(t,\omega) - X(t,\omega)|^2\,\big] = 0.
\]
Equivalently, $X_n \to X$ in mean square if and only if
\[
\| X_n(t,\cdot) - X(t,\cdot) \|_{L^2(\Omega)} 
:= \big( \mathbb{E}[|X_n(t,\omega) - X(t,\omega)|^2] \big)^{1/2} \longrightarrow 0
\quad \text{as } n \to \infty.
\]
\end{definition}

\section{Stochastic Interpolation Neural Network Operators}
In this section, we introduce the SINNOs with random coefficients. We then establish their well-definedness and boundedness, followed by deriving their approximation properties in the mean-square sense, in probability and along individual sample paths.

\begin{definition}\label{SINNO-definition}
Let $(X_t)_{t \in \mathcal{T}}$ be a stochastic process defined on the probability space $(\Omega, \mathcal{F}, \mathbb{P})$, and let $\eta \in \mathcal{A}(m)$.
The corresponding \emph{stochastic interpolation neural network operators} (SINNOs) with random coefficients activated by  $\varphi_{\mathcal{A}(m)}$ are defined by
\begin{equation}\label{eq:SINNO}
\mathcal{S}_n(X_t, t)
:= \sum_{k=0}^{n} X_{t_k}(\omega)\,
\varphi_{\mathcal{A}(m)}\!\left(\frac{2m}{\delta}(t - t_k)\right),
\qquad t \in \mathcal{T},\; \omega \in \Omega,
\end{equation}
where the nodes $\{t_k\}_{k=0}^{n}$ are uniformly spaced points in $\mathcal{T} = [0,T]$ given by
\[
t_k = k\delta, \qquad k = 0, 1, \dots, n,
\]
and $\delta = \frac{T}{n}$ denotes the uniform step size.

The operator $\mathcal{S}_n(X_t, t)$ acts as a stochastic interpolant of the process $X_t(\omega)$, 
where each realization of the random coefficients $X_{t_k}(\omega)$ is modulated by the compactly supported activation function $\varphi_{\mathcal{A}(m)}$. The structure of the above-defined SINNOs is illustrated in Figure \ref{fig:diagramn_SINNOs.}.
\end{definition}

\begin{figure}[H]
    \centering
    \includegraphics[width=1\linewidth]{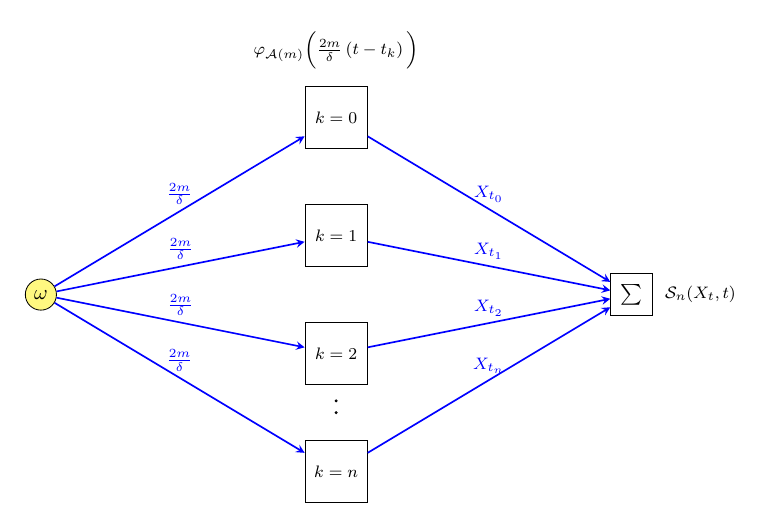}
    \caption{Structure of the above-defined SINNOs~$\mathcal{S}_n(X_t,t)$.}
    \label{fig:diagramn_SINNOs.}
\end{figure}
Firstly, we show that the operator $\mathcal{S}_n(X_t,t)$ is well-defined and bounded in the mean-square sense within the space $L^2(\Omega, \mathcal{F}, \mathbb{P})$.

\begin{theorem}[Mean-square boundedness of SINNOs]
\label{thm:bounded-SINNO}
Let $(X_t)_{t \in \mathcal{T}}$ be a stochastic process such that 
$X_t \in L^2(\Omega, \mathcal{F}, \mathbb{P})$ for each $t \in \mathcal{T}$, and let 
$\varphi_{\mathcal{A}(m)}$ be the activation function corresponding to a sigmoidal function 
$\eta \in \mathcal{A}(m)$.
Then, for every $t \in \mathcal{T}$,
\begin{equation}\label{eq:L2-boundedness}
    \mathbb{E}\!\left[ \big| \mathcal{S}_n(X_t, t) \big|^2 \right]
    \;\leq\;
    \mathcal{M}_0^2(\varphi_{\mathcal{A}(m)}) 
    \sup_{s \in \mathcal{T}} \mathbb{E}[|X_s|^2],
\end{equation}
where $\mathcal{M}_0(\varphi_{\mathcal{A}(m)})$ denotes the discrete absolute moment of order zero
(see Lemma~\ref{lem:discrete-moments-finite}).
\end{theorem}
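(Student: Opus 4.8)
The plan is to reduce the mean-square norm of the operator to a scalar inequality governed by the discrete moment $\mathcal{M}_0(\varphi_{\mathcal{A}(m)})$, exploiting that the activation weights are deterministic and nonnegative while only the coefficients $X_{t_k}$ carry randomness. Write $\psi_k(t) := \varphi_{\mathcal{A}(m)}\!\big(\tfrac{2m}{\delta}(t - t_k)\big)$, so that $\mathcal{S}_n(X_t,t) = \sum_{k=0}^{n} X_{t_k}(\omega)\,\psi_k(t)$, and recall from (M1) of Lemma~\ref{Lemma-properties-density} that $\psi_k(t) \ge 0$ for every $k$ and $t$.

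First I would pass to the $L^2(\Omega)$ norm and apply Minkowski's (triangle) inequality, moving the norm inside the finite sum. Because each weight $\psi_k(t)$ is a nonnegative deterministic scalar, it factors out of the norm, giving
\[
\big\| \mathcal{S}_n(X_t,t) \big\|_{L^2(\Omega)}
\le \sum_{k=0}^{n} \psi_k(t)\,\big\| X_{t_k} \big\|_{L^2(\Omega)}.
\]
Bounding each factor by $\|X_{t_k}\|_{L^2(\Omega)} = (\mathbb{E}|X_{t_k}|^2)^{1/2} \le (\sup_{s\in\mathcal{T}} \mathbb{E}|X_s|^2)^{1/2}$ and pulling this supremum out of the sum reduces the estimate to controlling $\sum_{k=0}^n \psi_k(t)$.

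The key step is the scalar bound $\sum_{k=0}^{n} \psi_k(t) \le \mathcal{M}_0(\varphi_{\mathcal{A}(m)})$ for every $t \in \mathcal{T}$. This is where the structure of $\varphi_{\mathcal{A}(m)}$ enters: by the compact support (M3), for each fixed $t$ only boundedly many indices $k$ make $\psi_k(t)$ nonzero, so the finite, nonnegative sum over $k = 0,\dots,n$ is dominated by the supremum over all lattice translates defining $\mathcal{M}_0(\varphi_{\mathcal{A}(m)})$, whose finiteness is guaranteed by Lemma~\ref{lem:discrete-moments-finite}. I expect this identification of the scaled activation sum with the discrete moment of order zero to be the main obstacle, since it requires matching the node scaling $\tfrac{2m}{\delta}(t - t_k)$ against the normalization used in the definition of $\mathcal{M}_0$.

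Finally, combining these bounds yields $\|\mathcal{S}_n(X_t,t)\|_{L^2(\Omega)} \le \mathcal{M}_0(\varphi_{\mathcal{A}(m)})\, (\sup_{s\in\mathcal{T}} \mathbb{E}|X_s|^2)^{1/2}$, and squaring both sides produces exactly the claimed inequality~\eqref{eq:L2-boundedness}, since $\|\mathcal{S}_n(X_t,t)\|_{L^2(\Omega)}^2 = \mathbb{E}[|\mathcal{S}_n(X_t,t)|^2]$. As an alternative to the Minkowski step, one could instead apply the weighted discrete Cauchy--Schwarz inequality in the form $\big|\sum_k X_{t_k}\psi_k\big|^2 \le \big(\sum_k \psi_k\big)\big(\sum_k |X_{t_k}|^2 \psi_k\big)$, take expectations, and use the same scalar bound twice; this route reaches the identical constant $\mathcal{M}_0^2(\varphi_{\mathcal{A}(m)})$.
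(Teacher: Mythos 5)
Your proof is correct, and it reaches the paper's bound by a genuinely different decomposition. Writing $\psi_k(t)=\varphi_{\mathcal{A}(m)}\bigl(\tfrac{2m}{\delta}(t-t_k)\bigr)$, the paper expands $\mathbb{E}\bigl[|\mathcal{S}_n(X_t,t)|^2\bigr]$ into the double sum $\sum_{k,j}\mathbb{E}[X_{t_k}X_{t_j}]\,\psi_k(t)\psi_j(t)$ and applies the Cauchy--Schwarz inequality to each covariance term, $|\mathbb{E}[X_{t_k}X_{t_j}]|\le\sup_{s\in\mathcal{T}}\mathbb{E}[|X_s|^2]$, before collapsing $\bigl(\sum_k\psi_k(t)\bigr)^2\le\mathcal{M}_0^2(\varphi_{\mathcal{A}(m)})$; you instead apply Minkowski's inequality in $L^2(\Omega)$ directly to the finite sum, which avoids the double sum and is marginally shorter, while the paper's expansion has the minor advantage of exposing the covariance structure (relevant if one later wanted sharper constants for uncorrelated coefficients). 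Your alternative route via the weighted Cauchy--Schwarz inequality $\bigl|\sum_k X_{t_k}\psi_k\bigr|^2\le\bigl(\sum_k\psi_k\bigr)\bigl(\sum_k|X_{t_k}|^2\psi_k\bigr)$ is also valid, and is essentially the convexity device the paper itself uses later in the proof of Theorem~\ref{Theorem-L^2-quantitative}. Both your argument and the paper's hinge on the same scalar estimate $\sum_{k=0}^{n}\psi_k(t)\le\mathcal{M}_0(\varphi_{\mathcal{A}(m)})$, and the scaling issue you flag there is real but affects the paper equally: after the substitution $u=\tfrac{2m}{\delta}\,t$ the operator's translates $\tfrac{2m}{\delta}(t-t_k)=u-2mk$ are spaced $2m$ apart, whereas Lemma~\ref{lem:discrete-moments-finite} bounds sums over \emph{integer} translates, so the appeal to $\mathcal{M}_0$ is not literal in either proof. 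The cleanest repair, available to both, bypasses $\mathcal{M}_0$ as an intermediate object: by (M1), (M3), and the partition-of-unity property (M4), one has $\sum_{k\in\mathbb{Z}}\varphi_{\mathcal{A}(m)}(u-2mk)=1$ for every $u\in\mathbb{R}$, hence $\sum_{k=0}^{n}\psi_k(t)\le 1\le\mathcal{M}_0(\varphi_{\mathcal{A}(m)})$, the last inequality holding because $\varphi_{\mathcal{A}(m)}(0)=\eta(m)-\eta(-m)=1$; with this patch your proof is complete and yields the identical constant $\mathcal{M}_0^2(\varphi_{\mathcal{A}(m)})\sup_{s\in\mathcal{T}}\mathbb{E}[|X_s|^2]$.
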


\begin{proof}
From Definition~\ref{SINNO-definition}, we have
\[
\mathcal{S}_n(X_t, t)
= \sum_{k=0}^{n} X_{t_k}(\omega)
  \,\varphi_{\mathcal{A}(m)}\!\left(\tfrac{2m}{\delta}(t - t_k)\right).
\]
Since each $X_{t_k} \in L^2(\Omega, \mathcal{F}, \mathbb{P})$ and $\varphi_{\mathcal{A}(m)}$ is bounded,
$\mathcal{S}_n(X_t, t)$ is a finite linear combination of $L^2$ random variables
and hence belongs to $L^2(\Omega, \mathcal{F}, \mathbb{P})$.

Expanding the square and applying linearity of expectation gives
\begin{align*}
\mathbb{E}\!\left[\big| \mathcal{S}_n(X_t,t) \big|^2\right]
&= \mathbb{E}\!\left[
    \sum_{k=0}^{n}\sum_{j=0}^{n}
    X_{t_k}(\omega)\, X_{t_j}(\omega)\,
    \varphi_{\mathcal{A}(m)}\!\left(\tfrac{2m}{\delta}(t-t_k)\right)
    \varphi_{\mathcal{A}(m)}\!\left(\tfrac{2m}{\delta}(t-t_j)\right)
  \right] \\[3pt]
&= \sum_{k=0}^{n}\sum_{j=0}^{n}
   \mathbb{E}\!\left[X_{t_k} X_{t_j}\right]
   \varphi_{\mathcal{A}(m)}\!\left(\tfrac{2m}{\delta}(t-t_k)\right)
   \varphi_{\mathcal{A}(m)}\!\left(\tfrac{2m}{\delta}(t-t_j)\right).
\end{align*}

Using the Cauchy-Schwarz inequality,
\[
|\mathbb{E}[X_{t_k}X_{t_j}]|
   \le \big(\mathbb{E}[|X_{t_k}|^2]\big)^{1/2}
        \big(\mathbb{E}[|X_{t_j}|^2]\big)^{1/2}
   \le \sup_{s\in\mathcal{T}}\mathbb{E}[|X_s|^2].
\]
Hence,
\begin{align*}
\mathbb{E}\!\left[\big| \mathcal{S}_n(X_t,t) \big|^2\right]
&\le \sup_{s\in\mathcal{T}}\mathbb{E}[|X_s|^2]
     \left(
       \sum_{k=0}^{n}
       \big|\varphi_{\mathcal{A}(m)}\!\left(\tfrac{2m}{\delta}(t-t_k)\right)\big|
     \right)^{\!2}.
\end{align*}
By Lemma~\ref{lem:discrete-moments-finite} (for $\alpha=0$),
\[
\sup_{t\in\mathbb{R}}
\sum_{k\in\mathbb{Z}}
|\varphi_{\mathcal{A}(m)}(t-t_k)|
\le \mathcal{M}_0(\varphi_{\mathcal{A}(m)}),
\]
which yields the desired bound.
\end{proof}

We will now explore the interpolation and approximation properties of the sequence of operators, $\mathcal{S}_n(X_t,t)$ defined in (\ref{SINNO-definition}) in the mean square sense.

\begin{theorem}
\label{Theorem.L2-interpolation}
Let \((X_t)_{t\in\mathcal T}\) be such that \(X_t\in L^2(\Omega,\mathcal F,\mathbb P)\)
for every \(t\), and let \(\eta\in\mathcal A(m)\). Then for each node
\(t_i=i\delta\) \((i=0,1,\dots,n)\),
\[
\mathbb{E}\!\big[|\mathcal S_n(X_t,t_i)-X_{t_i}|^2\big]=0.
\]
\end{theorem}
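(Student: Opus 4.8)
The plan is to exploit the discrete interpolation (Kronecker-delta) property of the activation function $\varphi_{\mathcal{A}(m)}$ at the sampling nodes. This will collapse the finite sum defining $\mathcal{S}_n$ to a single surviving term and force the operator to reproduce the process exactly at each $t_i$, path by path, after which the $L^2$ statement is immediate.

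First I would substitute $t = t_i$ into the definition \eqref{eq:SINNO}. Because the nodes are uniformly spaced as $t_k = k\delta$, the argument of the activation function simplifies to
\[
\frac{2m}{\delta}(t_i - t_k) = \frac{2m}{\delta}(i-k)\delta = 2m(i-k),
\]
so that
\[
\mathcal{S}_n(X_t, t_i) = \sum_{k=0}^{n} X_{t_k}(\omega)\,\varphi_{\mathcal{A}(m)}\big(2m(i-k)\big).
\]

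The heart of the argument is to evaluate $\varphi_{\mathcal{A}(m)}\big(2m(i-k)\big)$ for each integer $i-k$, using the defining relation $\varphi_{\mathcal{A}(m)}(t)=\eta(t+m)-\eta(t-m)$ together with condition $\mathcal{A}_2$ (that is, $\eta(t)=1$ for $t\ge m$ and $\eta(t)=0$ for $t\le -m$). For $k=i$ the argument is $0$, giving $\varphi_{\mathcal{A}(m)}(0)=\eta(m)-\eta(-m)=1-0=1$. For $k\neq i$ with $|i-k|=1$ the argument is $\pm 2m$, and a direct evaluation (for instance $\eta(3m)-\eta(m)=1-1=0$) yields $0$; for $|i-k|\ge 2$ the argument lies outside the support $[-2m,2m]$ of $\varphi_{\mathcal{A}(m)}$ by property (M3), so the value is again $0$. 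Hence $\varphi_{\mathcal{A}(m)}\big(2m(i-k)\big)$ equals $1$ when $k=i$ and $0$ otherwise.

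Consequently only the $k=i$ term survives, so $\mathcal{S}_n(X_t,t_i)=X_{t_i}(\omega)$ holds as an identity between random variables for every $\omega\in\Omega$. Therefore $\mathcal{S}_n(X_t,t_i)-X_{t_i}=0$ pointwise on $\Omega$, and in particular $\mathbb{E}\big[|\mathcal{S}_n(X_t,t_i)-X_{t_i}|^2\big]=0$, which is the claim. I do not expect a genuine obstacle here; the only point requiring care is the boundary case $|i-k|=1$, where the argument hits the endpoint $\pm 2m$ of the support and vanishing must be verified directly from $\mathcal{A}_2$ rather than inferred from an open-support condition.
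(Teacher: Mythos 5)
Your proof is correct and follows essentially the same route as the paper's: both reduce to the Kronecker-delta property $\varphi_{\mathcal{A}(m)}\big(2m(i-k)\big)=\delta_{ik}$ at the uniformly spaced nodes, so only the $k=i$ term survives. If anything, your version is slightly tighter — you check the endpoint case $|i-k|=1$ (argument exactly $\pm 2m$, which the support property (M3) alone does not settle) directly from $\mathcal{A}_2$, and you obtain the pathwise identity $\mathcal{S}_n(X_t,t_i)=X_{t_i}(\omega)$ for every $\omega$, whereas the paper instead rewrites the error using an implicit partition of unity at $t_i$, expands $\mathbb{E}\big[|\mathcal{E}|^2\big]$ as a double sum, and annihilates all off-diagonal terms by the same support argument.
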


\begin{proof}
Fix a node \(t_i\). Write the interpolation error at \(t_i,\) as a random
variable
\[
\mathcal E(\omega)
:= \mathcal S_n(X_t,t_i) - X_{t_i}(\omega)
= \sum_{k=0}^n \big(X_{t_k}(\omega)-X_{t_i}(\omega)\big)
  \varphi_{\mathcal A(m)}\!\Big(\tfrac{2m}{\delta}(t_i-t_k)\Big).
\]
Because each \(\varphi_{\mathcal A(m)}(\cdot)\) is deterministic, we expand
the mean square of \(\mathcal E\) and shift the deterministic terms outside
the expectation:
\begin{align}\nonumber
    \mathbb{E}\!\big[|\mathcal E|^2\big]
&= \mathbb{E}\bigg[\bigg(\sum_{k=0}^n (X_{t_k}-X_{t_i})\varphi_k\bigg)
           \bigg(\sum_{j=0}^n (X_{t_j}-X_{t_i})\varphi_j\bigg)\bigg]&& \\[3pt] \label{eq:double_sum}
&= \sum_{k=0}^n\sum_{j=0}^n \varphi_k\varphi_j\,
  \mathbb{E}\big[(X_{t_k}-X_{t_i})(X_{t_j}-X_{t_i})\big],
\end{align}
where we abbreviated
\(\varphi_k:=\varphi_{\mathcal A(m)}\!\big(\tfrac{2m}{\delta}(t_i-t_k)\big)\).
Now using the support property (M3), for any \(k\neq i,\) we have
\[
\Big|\tfrac{2m}{\delta}(t_i-t_k)\Big|=2m|i-k|\ge 2m,
\]
 
so \(\varphi_k=0\) for all \(k\neq i\).
Consequently every term in \eqref{eq:double_sum} with \(k\neq i\) or \(j\neq i\)
vanishes, leaving only the \((k,j)=(i,i)\) term:
\[
\mathbb{E}\!\big[|\mathcal E|^2\big] = \varphi_i^2\,
\mathbb{E}\big[(X_{t_i}-X_{t_i})^2\big] = \varphi_i^2\cdot 0 = 0.
\]
Therefore, \(\mathbb{E}[|\mathcal S_n(X_t,t_i)-X_{t_i}|^2]=0\), as required.
\end{proof}

\begin{theorem}
\label{thm:constant-reproduction}
If $X_t(\omega)=1$ for all $t\in\mathcal T$ and $\omega\in\Omega$, then
\[
\mathbb{E}\!\big[|\mathcal S_n(1,t)-1|^2\big]=0,\qquad\forall\,t\in\mathcal T.
\]
\end{theorem}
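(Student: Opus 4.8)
The plan is to show that the constant process $X_t\equiv 1$ is reproduced exactly by the operator, so the mean-square error collapses to a deterministic identity. Since $X_t(\omega)=1$ is deterministic, the expectation is redundant and the whole statement reduces to proving the pointwise identity $\mathcal{S}_n(1,t)=1$ for every $t\in\mathcal{T}$. Concretely, substituting $X_{t_k}(\omega)=1$ into the definition~\eqref{eq:SINNO} gives
\[
\mathcal{S}_n(1,t)=\sum_{k=0}^{n}\varphi_{\mathcal{A}(m)}\!\left(\tfrac{2m}{\delta}(t-t_k)\right),
\]
so the heart of the matter is to verify that this finite sum of shifted activation functions equals $1$ identically on $[0,T]$. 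Once that partition-of-unity identity is in hand, $\mathcal{S}_n(1,t)-1=0$ as a deterministic quantity, hence $\mathbb{E}[|\mathcal{S}_n(1,t)-1|^2]=\mathbb{E}[0]=0$, finishing the proof.

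First I would record that the argument of the $k$-th term, written as $u_k:=\tfrac{2m}{\delta}(t-t_k)=\tfrac{2m}{\delta}(t-k\delta)$, satisfies $u_{k}-u_{k+1}=2m$, so consecutive arguments differ by exactly $2m$. This is the structural feature that lets me invoke property~(M4), namely $\varphi_{\mathcal{A}(m)}(s)+\varphi_{\mathcal{A}(m)}(s-2m)=1$ for $s\in[0,2m]$, together with the support property~(M3) that $\varphi_{\mathcal{A}(m)}$ vanishes outside $[-2m,2m]$. Fix an arbitrary $t\in[0,T]$ and let $k^\ast$ be the largest index with $t_{k^\ast}\le t$, so that $t\in[t_{k^\ast},t_{k^\ast+1})$. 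Then I would argue that at most the two consecutive terms $k=k^\ast$ and $k=k^\ast+1$ can be nonzero: for these two indices the scaled argument $s:=\tfrac{2m}{\delta}(t-t_{k^\ast})$ lies in $[0,2m)$ and the neighbouring argument equals $s-2m\in[-2m,0)$, while every other index produces an argument of modulus at least $2m$, landing in the zero region by~(M3). Applying~(M4) to this pair yields $\varphi_{\mathcal{A}(m)}(s)+\varphi_{\mathcal{A}(m)}(s-2m)=1$, so the sum telescopes to $1$.

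The main obstacle, and the step requiring the most care, is the handling of boundary effects at the endpoints of $\mathcal{T}$, since the clean partition-of-unity identity is really a statement about the full bi-infinite sum $\sum_{k\in\mathbb{Z}}\varphi_{\mathcal{A}(m)}(\tfrac{2m}{\delta}(t-t_k))$, whereas the operator truncates to $k\in\{0,\dots,n\}$. I must confirm that for $t\in[0,T]$ the only indices that could contribute—namely $k^\ast$ and $k^\ast+1$—already lie inside $\{0,\dots,n\}$, so that no mass is lost to truncation. For interior $t$ this is immediate; at the left endpoint $t=0$ one has $k^\ast=0$ and the putative neighbour $k=-1$ gives argument $\tfrac{2m}{\delta}(0-(-\delta))=2m$, which by~(M3) and the value $\varphi_{\mathcal{A}(m)}(2m)=0$ contributes nothing, so the single surviving term $\varphi_{\mathcal{A}(m)}(0)$ must equal $1$; symmetrically at $t=T$ the index $k=n$ carries the full value and the excluded index $k=n+1$ is irrelevant. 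Verifying $\varphi_{\mathcal{A}(m)}(0)=1$ from the definition $\varphi_{\mathcal{A}(m)}=\eta(\cdot+m)-\eta(\cdot-m)$ together with condition $\mathcal{A}_2$ (so $\eta(m)=1$, $\eta(-m)=0$) closes this edge case. With the endpoints thus accounted for, the truncated sum coincides with the infinite partition of unity on all of $[0,T]$, giving $\mathcal{S}_n(1,t)=1$ and hence the claimed vanishing of the mean-square error.
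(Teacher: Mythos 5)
Your proof is correct and takes essentially the same route as the paper's: reduce $\mathcal{S}_n(1,t)-1$ to a deterministic quantity, use the support property (M3) to localize the sum to the two neighbouring nodes $k^\ast$ and $k^\ast+1$, and conclude via the partition-of-unity identity (M4). Your explicit endpoint checks (e.g.\ $\varphi_{\mathcal{A}(m)}(0)=1$ and $\varphi_{\mathcal{A}(m)}(\pm 2m)=0$ from condition $\mathcal{A}_2$) are a more careful elaboration of the same argument, not a different method, since the paper's decomposition $t\in[t_i,t_{i+1}]$ with $i\in\{0,\dots,n-1\}$ already keeps both contributing indices inside $\{0,\dots,n\}$.
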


\begin{proof}
Fix \(t\in\mathcal T\). Since \(\varphi_{\mathcal A(m)}\) is deterministic
(i.e. independent of \(\omega\)), the SINNOs applied to the constant function
reduces to the deterministic finite sum
\[
\mathcal S_n(1,t)
= \sum_{k=0}^n \varphi_{\mathcal A(m)}\!\Big(\tfrac{2m}{\delta}(t-t_k)\Big).
\]
Hence, the mean-square error is just the square of a deterministic number:
\[
\mathbb{E}\!\big[|\mathcal S_n(1,t)-1|^2\big]
= \Big(\sum_{k=0}^n \varphi_{\mathcal A(m)}\!\Big(\tfrac{2m}{\delta}(t-t_k)\Big)-1\Big)^2.
\]
If \(t\in[t_i,t_{i+1}],\) then by (M3) only the terms with \(k=i\) and
\(k=i+1\) can be nonzero. By the partition-of-unity property (M4) which proves the theorem.
\end{proof}

We will now examine the quantitative estimates for the sequence of operators $\mathcal{S}_n(X_t,t)$ defined in (\ref{SINNO-definition}) in the space $L^2 (\Omega, \mathcal{F}, \mathbb{P})$ and measure the rate of approximation in terms of modulus of continuity defined below.

Let $ X_t(\omega)\in L^2 (\Omega, \mathcal{F}, \mathbb{P})$. For $h>0$, the function 
 \[
\mathcal{W}\left(X_t,h\right) = \max \left\{ \mathbb E|X_t(\omega) - X_s(\omega)|^2 : t, s \in \mathcal{T}, |s - t| \leq h \right\},
\]
 is known as the modulus of continuity of $X_t(\omega).$

The following theorem outlines several key properties of the modulus of continuity defined above.
\begin{theorem} Let \( X_t(\omega)\in  L^2 (\Omega, \mathcal{F}, \mathbb{P}).\) Then
\begin{enumerate} 
     \item [(i)] \(\mathcal{W}\left(X_t,h\right)\) is non-decreasing in $h;$
     \item[(ii)] If \(X_t(\omega)\) is mean square continuous, then \(\mathcal{W}\left(X_t,h\right)\) is  continuous in $h,$ and \(\mathcal{W}\left(X_t,h\right) \to 0\) as $h\to 0_{+};$ 
     \item[(iii)] If \( X_t(\omega) \)  is a stochastic process with \[ \mathbb E|X_t(\omega) - X_s(\omega)|^2 \leq C |t - s|^\alpha \] for some constant \( C > 0 \) and exponent \( 0 < \alpha \leq 1 \), then the modulus of continuity \( \mathcal{W}(X_t, h) \) is bounded and  \[
\mathcal{W}(X_t, h) \leq C h^\alpha;
\]
\item[(iv)] Let  \( X_t(\omega) \) and \( Y_t(\omega) \) be two independent stochastic processes. Then 
\[
\mathcal{W}(X_t + Y_t, h) \leq \mathcal{W}(X_t, h) + \mathcal{W}(Y_t, h);
\]   
\item[(v)] Let \( X_t(\omega) \) and \( Y_t(\omega) \) be two dependent stochastic processes. Then 
\[
\mathcal{W}(X_t + Y_t, h) \leq \mathcal{W}(X_t, h) + \mathcal{W}(Y_t, h) + 2\sqrt{\mathcal{W}(X_t, h) \mathcal{W}(Y_t, h)}.
\]
\end{enumerate}
\end{theorem}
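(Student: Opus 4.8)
The plan is to treat all five parts through a single computational engine---the definition of $\mathcal{W}$ as a maximum of $g(t,s):=\mathbb{E}|X_t(\omega)-X_s(\omega)|^2$ over the ``diagonal strip'' $K_h:=\{(t,s)\in\mathcal{T}^2:|t-s|\le h\}$---and to isolate where each hypothesis (monotonicity, continuity, H\"older regularity, independence) actually enters.

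Parts (i) and (iii) I would dispatch first, since they are essentially definitional. For (i), observe that $h_1\le h_2$ gives the nesting $K_{h_1}\subseteq K_{h_2}$; taking the maximum of the fixed nonnegative function $g$ over a larger set cannot decrease it, so $\mathcal{W}(X_t,h_1)\le\mathcal{W}(X_t,h_2)$. For (iii), the pointwise hypothesis $g(t,s)\le C|t-s|^\alpha$ together with $|t-s|\le h$ gives $g(t,s)\le Ch^\alpha$ for every admissible pair (using $\alpha>0$, so $x\mapsto x^\alpha$ is increasing and $|t-s|^\alpha\le h^\alpha$); taking the maximum over $K_h$ yields $\mathcal{W}(X_t,h)\le Ch^\alpha$.

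The analytic core---and the step I expect to be the main obstacle---is (ii), the continuity of $h\mapsto\mathcal{W}(X_t,h)$. First I would record that mean square continuity makes $t\mapsto X_t$ a continuous map into $L^2(\Omega)$, so that $g(t,s)=\|X_t-X_s\|_{L^2(\Omega)}^2$ is continuous on the compact square $\mathcal{T}^2$ and therefore uniformly continuous there. Uniform continuity applied to the pairs $(t,s)$ and $(t,t)$, together with $g(t,t)=0$, immediately gives $\mathcal{W}(X_t,h)\to 0$ as $h\to 0_+$. For genuine continuity in $h$ I would use the monotonicity from (i) to reduce the claim to the absence of jumps, and prove both one-sided limits by a compactness argument: for $h_n\to h_0$ choose maximizers $(t_n,s_n)\in K_{h_n}$ with $g(t_n,s_n)=\mathcal{W}(X_t,h_n)$, extract a convergent subsequence $(t_n,s_n)\to(t^\ast,s^\ast)$, and note $|t^\ast-s^\ast|\le h_0$, so $(t^\ast,s^\ast)\in K_{h_0}$; continuity of $g$ then pins the limit to $g(t^\ast,s^\ast)\le\mathcal{W}(X_t,h_0)$, while for the left limit one perturbs a boundary maximizer (at distance exactly $h_0$) slightly inward, staying inside the interval $\mathcal{T}$, to recover values arbitrarily close to $\mathcal{W}(X_t,h_0)$. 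The delicacy lies precisely in this active-constraint case $|t^\ast-s^\ast|=h_0$.

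Finally, parts (iv) and (v) both start from expanding
\[
\mathbb{E}\big|(X_t+Y_t)-(X_s+Y_s)\big|^2
= g_X(t,s)+2\,\mathbb{E}\big[(X_t-X_s)(Y_t-Y_s)\big]+g_Y(t,s),
\]
with $g_X,g_Y$ the respective integrands. For (v) I would bound the cross term by Cauchy--Schwarz, $|\mathbb{E}[(X_t-X_s)(Y_t-Y_s)]|\le\sqrt{g_X(t,s)}\,\sqrt{g_Y(t,s)}\le\sqrt{\mathcal{W}(X_t,h)\,\mathcal{W}(Y_t,h)}$ on $K_h$, so the right-hand side becomes the perfect square $\big(\sqrt{\mathcal{W}(X_t,h)}+\sqrt{\mathcal{W}(Y_t,h)}\big)^2$; maximizing over $K_h$ gives the stated bound. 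For (iv) the same expansion applies, but now independence factorizes the cross term as $\mathbb{E}[X_t-X_s]\,\mathbb{E}[Y_t-Y_s]$; under the centering convention for second-order processes this vanishes, the identity reduces to $g_X+g_Y$, and subadditivity of the maximum, $\max_{K_h}(g_X+g_Y)\le\max_{K_h}g_X+\max_{K_h}g_Y$, yields $\mathcal{W}(X_t+Y_t,h)\le\mathcal{W}(X_t,h)+\mathcal{W}(Y_t,h)$. I would flag that it is exactly this vanishing of the cross term---and hence the centering assumption hidden behind ``independent''---that distinguishes (iv) from (v): without it, only the weaker bound (v) survives.
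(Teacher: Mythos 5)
Your proposal is correct, and on parts (i), (iii) and (v) it coincides with the paper's own argument: nesting of the admissible sets $\{(t,s):|t-s|\le h\}$ for monotonicity, the pointwise H\"older bound followed by a supremum for (iii), and Cauchy--Schwarz on the cross term yielding the perfect square $\bigl(\sqrt{\mathcal{W}(X_t,h)}+\sqrt{\mathcal{W}(Y_t,h)}\bigr)^2$ for (v). You genuinely diverge---in both cases to your advantage---on (ii) and (iv). For (ii), the paper proves only the limit $\mathcal{W}(X_t,h)\to 0$ as $h\to 0_{+}$, and even there it tacitly uses a single $h_0$ valid for all pairs $(t,s)$, i.e.\ \emph{uniform} mean-square continuity, which your appeal to compactness of $\mathcal{T}^2$ makes explicit; the continuity of $h\mapsto\mathcal{W}(X_t,h)$ asserted in the statement is never actually proved in the paper. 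Your maximizer-extraction argument, with the inward perturbation of an active-constraint maximizer at $|t^{\ast}-s^{\ast}|=h_0$ for the left limit, supplies exactly the missing piece, and as a by-product justifies that the ``max'' in the definition of $\mathcal{W}$ is attained at all. For (iv), the paper simply asserts the Pythagorean identity $\mathbb{E}\,|(X_t+Y_t)-(X_s+Y_s)|^2=\mathbb{E}\,|X_t-X_s|^2+\mathbb{E}\,|Y_t-Y_s|^2$ for independent processes; as you correctly flag, independence only factorizes the cross term into $\mathbb{E}[X_t-X_s]\,\mathbb{E}[Y_t-Y_s]$, which need not vanish---take $X_t=Y_t=t$ deterministic, where $\mathcal{W}(X_t+Y_t,h)=4h^2$ exceeds $\mathcal{W}(X_t,h)+\mathcal{W}(Y_t,h)=2h^2$---so (iv) as stated requires centered or constant-mean increments, a hypothesis the paper leaves implicit. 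In short, your route follows the paper's computational skeleton but adds two substantive repairs: the compactness proof of genuine continuity in (ii), and the centering caveat in (iv), without which only the weaker bound (v) survives, exactly as you observe.
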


\begin{proof}
        \begin{enumerate}
            \item [(i)] It is obvious that, by the definition of \(\mathcal{W}\left(X_t,h\right)\), it takes the supremum over all pairs \(t,s\) such that \(|t-s| \leq h\). Therefore, increasing \(h\) only adds more terms to the maximization set, ensuring that \(\mathcal{W}\left(X_t,h\right)\) does not decrease.
            
\item[(ii)]By the definition of \( \mathcal{W}(X_t, h) \), we have  
\[
\mathcal{W}(X_t, h) = \max \left\{\mathbb E |X_t(\omega) - X_s(\omega)|^2 : t, s \in \mathcal{T}, |s - t| \leq h \right\}.
\]  
Since \( X_t(\omega) \) is mean-square continuous, it follows that for any \( \epsilon > 0 \), there exists some \( h_0 > 0 \) such that  
\[
\mathbb E |X_t(\omega) - X_s(\omega)|^2 < \epsilon \quad \text{whenever } |t - s| < h_0.
\]  
Taking the supremum over all pairs \( (t, s) \) such that \( |s - t| \leq h \), we obtain  
\[
\mathcal{W}(X_t, h) \to 0 \quad \text{as } h \to 0.
\]  
Thus, the modulus of continuity \( \mathcal{W}(X_t, h) \) is right-continuous at \( h = 0 \).

\item [(iii)]   
By the definition of the modulus of continuity, we have 
\[
\mathcal{W}(X_t, h) = \max \left\{ \mathbb E|X_t(\omega) - X_s(\omega)|^2 : |s - t| \leq h \right\}.
\]  
Given that \( \mathbb E|X_t(\omega) - X_s(\omega)|^2 \leq C |t - s|^\alpha, \) it follows that  
\[
\mathbb E |X_t(\omega) - X_s(\omega)|^2 \leq C h^\alpha \quad \text{for all } t, s \in \mathcal{T}, |s - t| \leq h.
\]  
Taking the supremum over all such pairs \( (t, s) \), we get,  
\[
\mathcal{W}(X_t, h) \leq C h^\alpha.
\]
\item[(iv)]
For two independent processes $X_t(\omega)$ and $Y_t(\omega)$, we have
\[
\mathbb E|X_t(\omega) + Y_t(\omega) - (X_s(\omega) + Y_s(\omega))|^2 = \mathbb E|X_t(\omega) - X_s(\omega)|^2 + E|Y_t(\omega) - Y_s(\omega)|^2.
\]
Thus, the maximum of the sum on the RHS over \( |t - s| \leq h \) will satisfy
\[
\mathcal{W}(X_t + Y_t, h) = \max \left\{ \mathbb E|X_t(\omega) - X_s(\omega)|^2 + \mathbb E|Y_t(\omega) - Y_s(\omega)|^2 : |t - s| \leq h \right\}.
\]
This implies
\[
\mathcal{W}(X_t + Y_t, h) \leq \mathcal{W}(X_t, h) + \mathcal{W}(Y_t, h).
\]

\item[(v)] 
For dependent processes, we use the triangle inequality
\begin{align}\nonumber
\mathbb E|X_t(\omega) + Y_t(\omega)&-(X_s(\omega) + Y_s(\omega))|^2 \leq \mathbb E|X_t(\omega) - X_s(\omega)|^2 & \\[2pt] \nonumber
& \qquad\qquad\qquad\qquad\qquad\qquad+ \mathbb E|Y_t(\omega) - Y_s(\omega)|^2 && \\[3pt] \nonumber
&\qquad\qquad+2 \sqrt{\mathbb E|X_t(\omega) - X_s(\omega)|^2 \mathbb E|Y_t(\omega) - Y_s(\omega)|^2}.&& \\ \nonumber
\end{align}
Taking the supremum over \( |t - s| \leq h \), we get
\[
\mathcal{W}(X_t + Y_t, h) \leq \mathcal{W}(X_t, h) + \mathcal{W}(Y_t, h) + 2\sqrt{\mathcal{W}(X_t, h) \mathcal{W}(Y_t, h)}.
\]
\end{enumerate}
\end{proof}

\begin{theorem}
\label{Theorem-L^2-quantitative}
Let \(X_t(\omega)\in L^2(\Omega,\mathcal F,\mathbb P)\) for every \(t\in\mathcal T\),
and let \(\eta\in\mathcal A(m)\). For \(\delta=T/n\) the SINNOs
\(\mathcal S_n(X_t,t)\) defined in \eqref{SINNO-definition} satisfies 
\[
\mathbb{E}\!\big[\,|\mathcal S_n(X_t,t)-X_t|^2\,\big]\;\le\;
\mathcal W(X_t,\delta), \mbox{ for every }
t\in\mathcal T.
\]
Consequently, if \(X_t\) is mean-square continuous, then for each fixed \(t,\)
\(\mathcal S_n(X_t,t)\xrightarrow{L^2} X_t\) as \(n\to\infty\). Moreover,
the convergence is uniform in \(t\); i.e.,
\[
\sup_{t\in\mathcal T}\mathbb{E}\!\big[\,|\mathcal S_n(X_t,t)-X_t|^2\,\big]
\;\le\; \mathcal W(X_t,\delta)\xrightarrow{}0, \;\mbox{ as } n\to\infty.
\]
\end{theorem}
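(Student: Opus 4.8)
The plan is to reduce the mean-square error to a local estimate governed by the modulus of continuity, exploiting the partition-of-unity structure already used in Theorem~\ref{thm:constant-reproduction}. First I would fix $t \in \mathcal{T}$, locate it in a subinterval $[t_i, t_{i+1}]$, and abbreviate $\varphi_k := \varphi_{\mathcal{A}(m)}(\tfrac{2m}{\delta}(t - t_k))$, exactly as in the proof of Theorem~\ref{Theorem.L2-interpolation}. The key preliminary fact is that $\sum_{k=0}^{n}\varphi_k = 1$ for every $t\in\mathcal T$: by the support property (M3) only the indices $k=i,i+1$ can be nonzero, and the identity (M4) forces $\varphi_i+\varphi_{i+1}=1$. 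This lets me write $X_t = X_t\sum_{k=0}^{n}\varphi_k$ and express the error as a $\varphi$-weighted average of increments,
\[
\mathcal{S}_n(X_t,t) - X_t \;=\; \sum_{k=0}^{n}\bigl(X_{t_k}-X_t\bigr)\,\varphi_k .
\]

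For the main estimate I would follow the double-sum expansion used in Theorem~\ref{Theorem.L2-interpolation}, writing $\mathbb{E}\bigl[|\mathcal{S}_n(X_t,t)-X_t|^2\bigr]=\sum_{k,j}\varphi_k\varphi_j\,\mathbb{E}\bigl[(X_{t_k}-X_t)(X_{t_j}-X_t)\bigr]$ and bounding each covariance by Cauchy--Schwarz, $|\mathbb{E}[(X_{t_k}-X_t)(X_{t_j}-X_t)]|\le (\mathbb{E}|X_{t_k}-X_t|^2)^{1/2}(\mathbb{E}|X_{t_j}-X_t|^2)^{1/2}$. The decisive step is localization: whenever $\varphi_k\neq 0$, property (M3) forces $|t-t_k|\le\delta$, so by the definition of the modulus of continuity $\mathbb{E}|X_{t_k}-X_t|^2\le\mathcal{W}(X_t,\delta)$ for every contributing index. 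Substituting this uniform bound and using $\varphi_k\ge 0$ (M1), the double sum factors as $\mathcal{W}(X_t,\delta)\bigl(\sum_{k}\varphi_k\bigr)^2=\mathcal{W}(X_t,\delta)$, which is the claimed inequality. (Equivalently and more directly, since $\{\varphi_k\}$ is a nonnegative partition of unity, the discrete Jensen inequality gives $|\sum_k(X_{t_k}-X_t)\varphi_k|^2\le\sum_k\varphi_k|X_{t_k}-X_t|^2$ pathwise, and taking expectations yields the same bound without invoking Cauchy--Schwarz.)

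The consequences then follow immediately. The right-hand side $\mathcal{W}(X_t,\delta)$ depends only on $\delta=T/n$ and not on the evaluation point $t$, so taking the supremum over $t\in\mathcal{T}$ costs nothing and uniformity is automatic. Mean-square continuity of $X_t$, via property (ii) of the modulus of continuity, gives $\mathcal{W}(X_t,\delta)\to 0$ as $\delta\to 0$, i.e. as $n\to\infty$, which yields both the pointwise $L^2$ convergence and its uniform version.

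I expect the only genuine subtlety to be the endpoint bookkeeping: one must confirm that the truncated sum $\sum_{k=0}^{n}\varphi_k$ still equals $1$ for all $t\in[0,T]$, in particular checking that at $t=t_n=T$ the single surviving term is $\varphi_n=\varphi_{\mathcal{A}(m)}(0)=1$ and that no subinterval ever calls for an out-of-range node. Once the partition of unity is secured on all of $\mathcal{T}$, the probabilistic part is routine, since Cauchy--Schwarz (or Jensen) neutralizes any correlation between the random coefficients $X_{t_k}$ and no independence assumption on the process is required.
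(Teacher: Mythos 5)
Your proposal is correct and follows essentially the same route as the paper: localize $t$ in $[t_i,t_{i+1}]$ via (M3), invoke the partition of unity (M4), bound each contributing increment $\mathbb{E}\,|X_{t_k}-X_t|^2$ by $\mathcal{W}(X_t,\delta)$ since $\varphi_k\neq 0$ forces $|t-t_k|\le\delta$, and observe that the resulting bound is independent of $t$, so the uniform statement and the $L^2$ convergence under mean-square continuity follow at once. The only cosmetic difference is that the paper controls the squared weighted sum by convexity of expectation (precisely your parenthetical Jensen variant), whereas your primary write-up runs the Cauchy--Schwarz double-sum expansion; the two are interchangeable here, and your endpoint check at $t=t_n$ is a harmless extra precaution the paper leaves implicit.
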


\begin{proof}
Fix $t\in\mathcal T$ and let $t\in[t_i,t_{i+1}]$ for some $i\in\{0,\dots,n-1\}$. 
By property (M3) of Lemma~\ref{Lemma-properties-density} only the two nearest nodes contribute to the sum, so

\[
\mathcal S_n(X_t,t)
= X_{t_i}\,\varphi\!\Big(\tfrac{2m}{\delta}(t-t_i)\Big)
  + X_{t_{i+1}}\,\varphi\!\Big(\tfrac{2m}{\delta}(t-t_{i+1})\Big).
\]
 Using $(M4)$, we have the partition of unity as
\begin{align}\label{eq:partition of unity}
    \varphi_{\mathcal{A}(m)}\left(\frac{2m}{\delta}\left(t-t_i\right)\right)+\varphi_{\mathcal{A}(m)}\left(\frac{2m}{\delta}\left(t-t_{i+1}\right)\right)=1.
 \end{align}
Thus, we rewrite the approximation error as under.
\begin{align}\nonumber
    \mathbb E\Bigl|\mathcal{S}_n(X_t,t)-X_t(\omega)\Bigl|^2&=\mathbb E\left|{\sum\limits_{k=0}^{ n }X_{t_k}(\omega)\varphi_{\mathcal{A}(m)}\left(\frac{2m}{\delta}\left(t-t_k\right)\right)}-X_t(\omega)\right|^2&&\\[2pt] \nonumber
    &=\mathbb E\Bigl|X_{t_i}(\omega)\varphi_{\mathcal{A}(m)}\left(\frac{2m}{\delta}\left(t-t_i\right)\right)&&\\[2pt] \nonumber
    &\quad\qquad\qquad +X_{t_{i+1}}(\omega)\varphi_{\mathcal{A}(m)}\left(\frac{2m}{\delta}\left(t-t_{i+1}\right)\right)-X_t(\omega)\Bigl|^2&&\\[2pt] \nonumber
    &=\mathbb E\Bigl|\left(X_{t_i}(\omega)-X_t(\omega)\right)\varphi_{\mathcal{A}(m)}\left(\frac{2m}{\delta}\left(t-t_i\right)\right)&&\\[2pt] \nonumber
    &\quad\quad\qquad+\left(X_{t_{i+1}}(\omega)-X_t(\omega)\right)\varphi_{\mathcal{A}(m)}\left(\frac{2m}{\delta}\left(t-t_{i+1}\right)\right)\Bigl|^2&&\\[2pt] \nonumber
   &\leq \varphi_{\mathcal{A}(m)}\left(\frac{2m}{\delta}\left(t-t_i\right)\right)\mathbb E\left|X_{t_i}(\omega)-X_t(\omega)\right|^2 && \\[2pt] \nonumber
    &  \quad\qquad\qquad+\varphi_{\mathcal{A}(m)}\left(\frac{2m}{\delta}\left(t-t_{i+1}\right)\right)\mathbb E\left|X_{t_{i+1}}(\omega)-X_t(\omega)\right|^2&&\\ \nonumber
    &\leq \varphi_{\mathcal{A}(m)}\left(\frac{2m}{\delta}\left(t-t_i\right)\right) \mathcal{W}(X_t,\delta) && \\[2pt] \nonumber
    &\qquad\qquad\qquad\quad + \varphi_{\mathcal{A}(m)}\left(\frac{2m}{\delta}\left(t-t_{i+1}\right)\right) \mathcal{W}(X_t,\delta)&&\\[2pt] \nonumber
    & \leq \mathcal{W}(X_t, \delta),
\end{align}
in view of the equation $(\ref{eq:partition of unity})$ and convexity of expectation.

If \(X_t\) is mean-square continuous, \(\mathcal W(X_t,\delta)\to0\) as \(\delta=T/n\to0\) as \(n\to\infty\), which implies
\(\mathbb{E}[|\mathcal S_n(X_t,t)-X_t|^2]\to0\) for each fixed \(t\). Taking
supremum in \(t\) yields the uniform statement
\(\sup_{t\in\mathcal T}\mathbb{E}[|\mathcal S_n(X_t,t)-X_t|^2]\le\mathcal W(X_t,\delta)\to0\),
so the convergence is uniform in \(t\) as claimed.
\end{proof}

\begin{corollary}[Rate of convergence under H\"older condition]
\label{cor:holder-rate}
Suppose there exist constants \(C>0\) and \(0<\alpha\le 1\) such that  
\[
\mathbb{E}\!\left[\,|X_t - X_s|^2\,\right] \le C |t-s|^\alpha,
\qquad \forall\, s,t \in \mathcal T.
\]
Then for every \(t \in \mathcal T\),
\[
\mathbb{E}\!\big[\,|\mathcal S_n(X_t,t) - X_t|^2\,\big]
\;\le\; C \delta^\alpha
= C\left(\frac{T}{n}\right)^\alpha.
\]
Consequently, \(\mathcal S_n(X_t,t) \to X_t\) in mean square with rate \(O(n^{-\alpha})\).
\end{corollary}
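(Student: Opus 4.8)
The plan is to obtain this bound as an immediate consequence of the two results already established, namely the quantitative estimate of Theorem~\ref{Theorem-L^2-quantitative} and the H\"older-type control of the modulus of continuity proved as property~(iii) of the modulus-of-continuity theorem above. No genuinely new estimate is required; the whole argument amounts to chaining these two facts together and verifying that the constant and exponent carry through unchanged.

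First I would invoke Theorem~\ref{Theorem-L^2-quantitative}, which guarantees that for every \(t\in\mathcal T\),
\[
\mathbb{E}\big[\,|\mathcal S_n(X_t,t)-X_t|^2\,\big]\;\le\;\mathcal W(X_t,\delta),
\]
with \(\delta=T/n\). This step is purely structural: it reduces the task of estimating the approximation error to that of bounding the modulus of continuity at the mesh size \(\delta\), and it holds for any square-integrable process without further regularity assumptions.

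Next I would observe that the standing hypothesis \(\mathbb{E}[|X_t-X_s|^2]\le C|t-s|^\alpha\) is exactly the H\"older condition appearing in property~(iii). Applying that property with \(h=\delta\) yields \(\mathcal W(X_t,\delta)\le C\delta^\alpha\), and substituting into the previous display gives
\[
\mathbb{E}\big[\,|\mathcal S_n(X_t,t)-X_t|^2\,\big]\;\le\;C\delta^\alpha=C\Big(\tfrac{T}{n}\Big)^\alpha.
\]
Since \((T/n)^\alpha\to0\) as \(n\to\infty\), the claimed rate \(O(n^{-\alpha})\) follows at once.

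As for the main obstacle, there is essentially none, since both building blocks are in place. The only point deserving any care is to confirm that the constant \(C\) and exponent \(\alpha\) transfer verbatim from the H\"older hypothesis, through property~(iii), to the final bound; this is assured because \(|t-s|^\alpha\le\delta^\alpha\) uniformly over the set \(\{|t-s|\le\delta\}\) entering the definition of \(\mathcal W\), so the maximum does not inflate the constant. I would also remark that the bound is uniform in \(t\), so the corollary simultaneously yields the uniform rate \(\sup_{t\in\mathcal T}\mathbb{E}[|\mathcal S_n(X_t,t)-X_t|^2]\le C(T/n)^\alpha\).
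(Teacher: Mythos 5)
Your proposal is correct and follows essentially the same route as the paper's own proof: bound \(\mathcal W(X_t,\delta)\le C\delta^\alpha\) via the H\"older hypothesis (the paper verifies this inline rather than citing property~(iii), but the computation is identical) and then apply Theorem~\ref{Theorem-L^2-quantitative} with \(\delta=T/n\). Your closing remark that the bound is uniform in \(t\) is a small, accurate addition beyond what the paper states explicitly.
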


\begin{proof}
By definition of the modulus of continuity,
\[
\mathcal W(X_t,\delta)
= \sup_{|s-t|\le\delta}
   \mathbb{E}\!\big[\,|X_t - X_s|^2\,\big].
\]
Under the assumed H\"older condition,
\[
\mathbb{E}\!\big[\,|X_t - X_s|^2\,\big]
\;\le\; C |t-s|^\alpha
\;\le\; C\delta^\alpha,
\qquad \text{whenever } |t-s|\le\delta.
\]
Taking supremum over all such \(s\) yields
\[
\mathcal W(X_t,\delta)\le C\delta^\alpha.
\]

Applying Theorem~\ref{Theorem-L^2-quantitative}, we obtain
\[
\mathbb{E}\!\big[\,|\mathcal S_n(X_t,t)-X_t|^2\,\big]
\;\le\; \mathcal W(X_t,\delta)
\;\le\; C\delta^\alpha.
\]
Since \(\delta = \tfrac{T}{n}\), this becomes
\[
\mathbb{E}\!\big[\,|\mathcal S_n(X_t,t)-X_t|^2\,\big]
\;\le\; C\left(\frac{T}{n}\right)^\alpha.
\]

Thus,
\[
\mathbb{E}\!\big[\,|\mathcal S_n(X_t,t)-X_t|^2\,\big]
= O\!\left(n^{-\alpha}\right),
\]
which completes the proof.
\end{proof}

Now, we will explore the interpolating and convergence properties of our SINNOs defined in (\ref{SINNO-definition}) in probability.

\begin{theorem}[Uniform boundedness in probability]
\label{thm:bounded-in-prob}
Assume $(X_t)_{t\in\mathcal T}$ satisfies $\sup_{s\in\mathcal T}\mathbb{E}[|X_s|^2]<\infty$.
Then the sequence $(\mathcal S_n(X_t,t))_{n\ge1}$ is uniformly bounded in probability, i.e.,
for every $\varepsilon>0$ there exists $M>0$ such that
\[
\sup_{n\ge1}\mathbb{P}\big(|\mathcal S_n(X_t,t)|\ge M\big)\le\varepsilon.
\]
\end{theorem}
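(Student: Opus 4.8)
The plan is to obtain the conclusion as a direct consequence of the mean-square bound in Theorem~\ref{thm:bounded-SINNO} together with Markov's inequality. The crucial structural observation is that the right-hand side of~\eqref{eq:L2-boundedness} is \emph{uniform in} $n$: the constant $\mathcal M_0(\varphi_{\mathcal A(m)})$ depends only on the activation function and is finite by Lemma~\ref{lem:discrete-moments-finite}, while $\sup_{s\in\mathcal T}\mathbb E[|X_s|^2]$ is finite by hypothesis and carries no dependence on $n$. Hence there is a single finite constant
\[
B := \mathcal M_0^2(\varphi_{\mathcal A(m)})\,\sup_{s\in\mathcal T}\mathbb E[|X_s|^2]
\]
satisfying $\mathbb E\!\big[|\mathcal S_n(X_t,t)|^2\big]\le B$ for every $n\ge 1$ (and every $t\in\mathcal T$).

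I would then fix $\varepsilon>0$ and $t\in\mathcal T$, and apply Markov's inequality to the nonnegative random variable $|\mathcal S_n(X_t,t)|^2$. For any $M>0$ this gives
\[
\mathbb P\big(|\mathcal S_n(X_t,t)|\ge M\big)
= \mathbb P\big(|\mathcal S_n(X_t,t)|^2\ge M^2\big)
\le \frac{\mathbb E\!\big[|\mathcal S_n(X_t,t)|^2\big]}{M^2}
\le \frac{B}{M^2}.
\]
Because the final bound $B/M^2$ is independent of $n$, taking the supremum over $n\ge1$ preserves it. Choosing $M=\sqrt{B/\varepsilon}$ (any $M>0$ works in the degenerate case $B=0$) yields $B/M^2=\varepsilon$, so $\sup_{n\ge1}\mathbb P\big(|\mathcal S_n(X_t,t)|\ge M\big)\le\varepsilon$, which is precisely the asserted uniform boundedness in probability.

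I do not anticipate a genuine obstacle: the statement is essentially a corollary of Theorem~\ref{thm:bounded-SINNO}, and the entire argument is the standard passage from a uniform second-moment bound to tightness via Markov's inequality. The only point demanding care is to verify explicitly that the second-moment bound is uniform in $n$---which holds exactly because neither $\mathcal M_0(\varphi_{\mathcal A(m)})$ nor $\sup_{s\in\mathcal T}\mathbb E[|X_s|^2]$ grows with the number of nodes---so that the choice of $M$ can be made once and for all, independently of $n$.
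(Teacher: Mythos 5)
Your proposal is correct and follows essentially the same route as the paper: both invoke the $n$-uniform second-moment bound $\mathbb{E}\big[|\mathcal S_n(X_t,t)|^2\big]\le \mathcal M_0^2(\varphi_{\mathcal A(m)})\sup_{s\in\mathcal T}\mathbb{E}[|X_s|^2]$ from Theorem~\ref{thm:bounded-SINNO} and then apply Markov's (Chebyshev's) inequality, choosing $M$ large enough independently of $n$. Your version is, if anything, slightly more careful than the paper's in making the choice $M=\sqrt{B/\varepsilon}$ explicit and in noting the degenerate case $B=0$.
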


\begin{proof}
By Theorem~\ref{thm:bounded-SINNO} there exists a constant $K>0$
(independent of $n$) such that for all $n$ and every fixed $t$, we have
\[
\mathbb{E}[|\mathcal S_n(X_t,t)|^2]\le K,
\]
where $K=\mathcal M_0^2(\varphi_{\mathcal A(m)})\sup_{s\in\mathcal T}\mathbb{E}[|X_s|^2]$.
Using Markov (Chebyshev) inequality, we have
\[
\mathbb{P}\big(|\mathcal S_n(X_t,t)|\ge M\big)\le \frac{\mathbb{E}[|\mathcal S_n(X_t,t)|^2]}{M^2}\le \frac{K}{M^2}.
\]
Choosing \(M\) large enough ensures that the right-hand side is at most \(\epsilon\), proving uniform boundedness in probability.
\end{proof}

\begin{theorem}[Probability Convergence]\label{The.Probability Convergence3.6}
     If $X_t(\omega) \in L^2 (\Omega, \mathcal{F}, \mathbb{P}).$ Then for every $i=0,1,...,n,$ $\mathcal{S}_n(X_t, t_i)$ converges to $X_{t_i}(\omega)$ in probability as $n \to \infty$.
\end{theorem}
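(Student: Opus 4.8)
The plan is to leverage the exact mean-square interpolation identity already proved at the nodes and then pass to convergence in probability through Chebyshev's inequality; no new analytic estimate is required.

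First, I would invoke Theorem~\ref{Theorem.L2-interpolation}, which establishes that for each node $t_i = i\delta$ the mean-square interpolation error vanishes identically,
\[
\mathbb{E}\!\big[\,|\mathcal S_n(X_t,t_i)-X_{t_i}|^2\,\big]=0.
\]
The structural reason, recalled from that proof, is the compact support property (M3): since the nodes are spaced by $\delta$ and the activation argument carries the scaling $2m/\delta$, one has $\frac{2m}{\delta}(t_i-t_k)=2m(i-k)$, whose modulus is at least $2m$ for every $k\neq i$, forcing $\varphi_{\mathcal A(m)}$ to annihilate all off-diagonal terms and leaving only the self-difference $X_{t_i}-X_{t_i}=0$.

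Second, I would apply Markov's inequality in its Chebyshev form to convert the $L^2$ control into a probability bound: for any $\varepsilon>0$,
\[
\mathbb{P}\big(|\mathcal S_n(X_t,t_i)-X_{t_i}|\ge\varepsilon\big)
\;\le\;
\frac{\mathbb{E}\!\big[\,|\mathcal S_n(X_t,t_i)-X_{t_i}|^2\,\big]}{\varepsilon^2}.
\]
Substituting the vanishing numerator shows the right-hand side equals zero for every $n$ and every $\varepsilon>0$, so the probability is identically zero; letting $n\to\infty$ (indeed already for each fixed $n$) gives convergence in probability to $X_{t_i}$. The same argument in fact yields the stronger conclusion that $\mathcal S_n(X_t,t_i)=X_{t_i}$ almost surely.

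The proof presents no genuine analytic obstacle, since the mean-square error is \emph{exactly} zero rather than merely small; the only point requiring care is bookkeeping the $n$-dependence of the nodes $t_i=iT/n$, clarifying that the interpolation identity holds at every node of each fixed partition and thus secures the probabilistic statement uniformly in $i$. If one wished instead to phrase convergence in probability at a generic fixed $t\in\mathcal T$ (not a node), the identical Chebyshev step would apply, with the vanishing numerator replaced by the bound $\mathcal W(X_t,\delta)$ from Theorem~\ref{Theorem-L^2-quantitative}, which tends to zero under mean-square continuity.
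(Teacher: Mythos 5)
Your proposal is correct and follows essentially the same route as the paper: apply Chebyshev's inequality to the interpolation error and invoke Theorem~\ref{Theorem.L2-interpolation} to see that the mean-square numerator vanishes at every node. Your added observations --- the recap of the support argument (M3), the fact that the error is exactly zero (giving almost-sure equality at nodes), and the remark on non-node points via Theorem~\ref{Theorem-L^2-quantitative} --- are sound refinements but do not change the method.
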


\begin{proof}
    For an arbitraty $\epsilon>0,$ using the Chebyshev's inequality, we have
\begin{align}\nonumber
\mathbb{P}\left(\left|\mathcal{S}_n(X_t,t_i)-X_{t_i}(\omega)\right|\geq \epsilon\right)\leq \frac{\mathbb E\left|\mathcal{S}_n(X_t,t_i)-X_{t_i}(\omega)\right|^2}{\epsilon^2}.
\end{align}    
In view of Theorem \ref{Theorem.L2-interpolation}, we conclude that interpolation in probability holds good.
\end{proof}

\begin{theorem}[Strong Convergence] If $X_t(\omega) \in L^2 (\Omega, \mathcal{F}, \mathbb{P}).$  Then $\mathcal{S}_n(X_t, t)$ converges to $X_t(\omega)$ almost surely.
\end{theorem}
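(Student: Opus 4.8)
The plan is to exploit the local interpolation structure of $\mathcal{S}_n$ and reduce the almost-sure statement to sample-path continuity of the process. First I would fix $t\in\mathcal T$ and, for each $n$, locate the subinterval $[t_i,t_{i+1}]$ (with $i=i(n)$) containing $t$. By the compact-support property (M3) of Lemma~\ref{Lemma-properties-density}, at most the two basis functions centered at $t_i$ and $t_{i+1}$ are nonzero at $t$, exactly as in the proof of Theorem~\ref{Theorem-L^2-quantitative}. Hence the sum collapses to
\[
\mathcal{S}_n(X_t,t) = X_{t_i}(\omega)\,\varphi_{\mathcal{A}(m)}\!\Big(\tfrac{2m}{\delta}(t-t_i)\Big) + X_{t_{i+1}}(\omega)\,\varphi_{\mathcal{A}(m)}\!\Big(\tfrac{2m}{\delta}(t-t_{i+1})\Big).
\]
By nonnegativity (M1) and the partition-of-unity identity (M4), the two weights are nonnegative and sum to one, so for every $\omega$ the value $\mathcal{S}_n(X_t,t)$ is a convex combination of $X_{t_i}(\omega)$ and $X_{t_{i+1}}(\omega)$. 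Abbreviating these weights as $\varphi_i,\varphi_{i+1}$ with $\varphi_i+\varphi_{i+1}=1$ and subtracting $X_t(\omega)=\varphi_i X_t(\omega)+\varphi_{i+1}X_t(\omega)$, the triangle inequality gives the pathwise bound
\[
\big|\mathcal{S}_n(X_t,t)-X_t(\omega)\big| \le \max\!\big\{\,|X_{t_i}(\omega)-X_t(\omega)|,\; |X_{t_{i+1}}(\omega)-X_t(\omega)|\,\big\}.
\]
Since $\delta=T/n\to0$, both nodes $t_i$ and $t_{i+1}$ converge to $t$ as $n\to\infty$.

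The decisive step, and the main obstacle, is to pass from $t_i,t_{i+1}\to t$ to $X_{t_i}(\omega),X_{t_{i+1}}(\omega)\to X_t(\omega)$ for $\mathbb P$-almost every $\omega$. This requires the almost-sure continuity of the sample paths $t\mapsto X_t(\omega)$; under that hypothesis the right-hand side above tends to zero for a.e.\ $\omega$, and the squeeze yields $\mathcal{S}_n(X_t,t)\to X_t(\omega)$ almost surely. I would therefore carry out the argument under the standing assumption that the process admits a version with continuous sample paths, and state this explicitly, since it is what genuinely drives the conclusion.

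Finally, I would emphasize why a purely $L^2$ route is insufficient. Mean-square integrability, or even mean-square continuity, delivers only convergence in $L^2$ and hence in probability (Theorems~\ref{Theorem-L^2-quantitative} and~\ref{The.Probability Convergence3.6}); from these one can extract an almost-surely convergent \emph{subsequence}, but not convergence of the full sequence. A tempting alternative is a Borel--Cantelli argument based on $\mathbb{E}\!\big[|\mathcal{S}_n(X_t,t)-X_t|^2\big]\le\mathcal W(X_t,T/n)$; however, under a H\"older bound $\mathcal W(X_t,\delta)\le C\delta^\alpha$ with $0<\alpha\le1$ the series $\sum_n (T/n)^\alpha$ diverges, so summability, and hence Borel--Cantelli along the full sequence, fails. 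This is precisely why the sample-path continuity argument, rather than a quantitative $L^2$ estimate, is the natural path to almost-sure convergence.
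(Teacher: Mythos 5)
Your proposal is correct (under the extra hypothesis you state explicitly: a version with almost surely continuous sample paths) and it takes a genuinely different route from the paper --- in fact, your third paragraph is precisely a refutation of the paper's own argument. The paper proceeds via Borel--Cantelli: it defines $A_n=\{\omega\in\Omega: |\mathcal S_n(X_t,t)-X_t(\omega)|\ge 1/n\}$ and asserts $\sum_{n}\mathbb P(A_n)<\infty$ ``from'' Theorem~\ref{The.Probability Convergence3.6}. That assertion is unjustified: Theorem~\ref{The.Probability Convergence3.6} concerns only the nodes $t_i$ (where by Theorem~\ref{Theorem.L2-interpolation} the error is exactly zero, and a fixed $t$ is a node for at most a sparse set of $n$), it says nothing at a general $t$, and in any case convergence in probability never delivers summability of $\mathbb P(A_n)$ --- least of all with a threshold $1/n$ shrinking to zero, since Chebyshev together with Theorem~\ref{Theorem-L^2-quantitative} yields only $\mathbb P(A_n)\le n^2\,\mathcal W(X_t,T/n)$, which under a H\"older bound grows like $n^{2-\alpha}$. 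This is exactly the failure mode you diagnose when you reject the quantitative $L^2$/Borel--Cantelli route. Your pathwise argument --- collapsing the sum via (M3) to a convex combination of $X_{t_i}$ and $X_{t_{i+1}}$ using (M1) and (M4), bounding the error by $\max\{|X_{t_i}(\omega)-X_t(\omega)|,\,|X_{t_{i+1}}(\omega)-X_t(\omega)|\}$, and invoking continuity of the path --- is sound, and the added hypothesis is unavoidable: with only $X_t\in L^2(\Omega,\mathcal F,\mathbb P)$ for each $t$ the stated theorem is false. For instance, take $T$ rational and $X_t(\omega)=\mathbf 1_{\mathbb Q}(t)$ for all $\omega$ (a deterministic, nowhere-continuous path with each $X_t$ a constant in $L^2$); then all nodes $t_k=kT/n$ are rational, so by the partition of unity $\mathcal S_n(X_t,t)=1$ for every $n$, while $X_t=0$ at irrational $t$. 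So the comparison is: the paper's proof contains a genuine gap (unsupported summability), whereas your proof is a correct repair that makes transparent the sample-path regularity the conclusion actually requires, at the cost of strengthening the hypothesis beyond what the theorem states.
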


\begin{proof}
    Define the event 
    $$ A_n= \left\{ \omega\in \Omega: \left|\mathcal{S}_n(X_t,t)-X_t(\omega)\right|\geq \frac{1}{n}\right\}.$$
 From probability convergence Theorem~\ref{The.Probability Convergence3.6}, we have
 $$\sum_{n=1}^{\infty}\mathbb{P}\left(A_n\right)<\infty.$$
 Applying the Borel-Cantelli Lemma, we conclude that
 $$\mathbb{P}(A_n \mbox{ occurs infinitely often })=0.$$
 Thus, for almost every $\omega,$ there exists $\lambda$ such that for all $n\geq \lambda,$
 $$\left|\mathcal{S}_n(X_t,t)-X_t(\omega)\right|<\frac{1}{n}.$$
 This completes the proof as $n\to \infty.$
\end{proof}

Now, we investigate the path-wise approximation capabilities of our stochastic interpolation-NNOs (SINNOs). 

For any fixed $\omega_i \in \Omega$, there exists a corresponding sample path $X_t(\omega_i)$ of the process $X_t(\omega)$. Depending on the properties of $X_t(\omega_i)$, we consider the following cases:  

\begin{enumerate}  
    \item[V1.] If $X_t(\omega_i) \in C(\mathcal{T})$, i.e., the sample path is continuous.  
    \item[V2.] If $X_t(\omega_i) \in AC(\mathcal{T})$, i.e., the sample path is absolutely continuous.  
    \item[V3.] If $X_t(\omega_i) \in L^p(\mathcal{T})$, i.e., the sample path belongs to the $L^p$ space.  
\end{enumerate}  

Under these conditions, the SINNOs introduced in (\ref{SINNO-definition}) reduce to deterministic interpolation-NNOs : 
\begin{equation}  
\mathcal{S}_n(X_t(\omega_i), t) = \sum\limits_{k=0}^{n} X_{t^{\omega_i}_k}(\omega_i) \varphi_{\mathcal{A}(m)}\left(\frac{2m}{\delta} (t - t^{\omega_i}_k)\right), \quad \omega_i \in \Omega, \quad t \in \mathcal{T},  
\end{equation}  
where the interpolation nodes $t^{\omega_i}_k$ are uniformly spaced time points given by  
$  
t^{\omega_i}_k = k\delta,\hspace{.15cm} k = 0,1,\dots,n,  
$  
with $\delta = \frac{T}{n}$.  
Consequently, the results established in \cite{qian2022rates-interpolation-A(m)-class} remain valid for our SINNOs.

\section{Numerical Validation }
In this section, we provide numerical experiments and visual representations to confirm the effectiveness of the proposed stochastic interpolation neural network operators (SINNOs). The numerical validations are carried out using the \textsc{MATLAB} implementation presented below, which simulates the process $X_t(\omega)$ and evaluates the performance of SINNOs through interpolation and mean square approximation errors.

\begin{algorithm}[H]

Given a second-order stochastic process \( X_t(\omega) \in L^2(\Omega, \mathcal{F}, \mathbb{P}) \),
sigmoidal function \( \eta \in \mathcal{A}(m) \), and its associated activation function
\(\varphi_{\mathcal{A}(m)}(t)\), the following steps
construct and analyze the SINNOs approximation:

\begin{enumerate}
\item Fix the time horizon \( [0,T] \), the number of realizations \( R \), and a set of \( n \)-nodes.

\item For each realization \( r = 1,2,\dots,R \):
  \begin{enumerate}
  \item Simulate one realization \( X_t^{(r)}(\omega) \) on a fine grid \( \{t_i\}_{i=0}^n\).
  \item Select a query point \( t_q \in (0,T) \).
  \end{enumerate}

\item For \( n \):
  \begin{enumerate}
  \item Define the interpolation nodes \( t_k = kT/n \) with step size \( \delta = T/n \).
  \item Construct the SINNOs:
  \[
  \mathcal{S}_n(X_t,t)
  = \sum_{k=0}^{n} X_{t_k}(\omega)\,
    \varphi_{\mathcal{A}(m)}\!\left(\frac{2m}{\delta}(t - t_k)\right).
  \]
  \item Evaluate \( \mathcal{S}_n(X_t,t) \) on both the interpolation nodes \( \{t_k\} \) and the full interval \( [0,T] \).
  \item Compute the following mean square errors:
  \begin{align*}
  \mathrm{MSE}_{\mathrm{nodes}}(n)
  &= \frac{1}{n}\sum_{k=0}^n \mathbb E\!\left[|X_{t_k} - \mathcal{S}_n(X_t,t_k)|^2\right],\\[2mm]
  \mathrm{MSE}_{\mathrm{query}}(n)
  &= \mathbb E\!\left[|X_{t_q} - \mathcal{S}_n(X_t,t_q)|^2\right],\\[2mm]
  \mathrm{MSE}_{\mathrm{global}}(n)
  &= \frac{1}{T}\int_0^T \mathbb E\!\left[|X_t - \mathcal{S}_n(X_t,t)|^2\right] dt.
  \end{align*}
  \end{enumerate}

\item Average the results over all realizations:
\[
\overline{\mathrm{MSE}}(n)
= \frac{1}{R} \sum_{r=1}^R \mathrm{MSE}^{(r)}(n).
\]

\item Plot the approximated process and the corresponding
error curves (node, query, and global MSE) with respect to \( n \).
\end{enumerate}
\caption{SINNOs Approximation Framework for a Stochastic Process \( X_t(\omega) \)}
\label{alg:SINNO}
\end{algorithm}

\subsection{Simulation Model}

Consider the Ornstein-Uhlenbeck (O-U) process \( X_t(\omega) \) governed by the stochastic differential equation (SDE)
\[
dX_t(\omega) = \theta (\mu - X_t(\omega)) \, dt + \sigma \, dW_t(\omega), \qquad  t \in [0,10],
\]
where \( \theta > 0 \) is the mean-reversion rate, \( \mu \) is the long-term mean, \( \sigma > 0 \) is the volatility parameter, and \( W_t(\omega) \) denotes the standard Wiener process (Brownian motion).
For our simulation, we take the parameters
\[
\theta = 0.5, \qquad \mu = 0, \qquad \sigma = 1, \qquad X_0 = 0.
\]
The stochastic process is simulated on a uniform fine-grid using the Euler-Maruyama discretization method. The ramp sigmoidal function
\[
\eta_{R}(t) =
\begin{cases}
0, & t \le -\frac{1}{2}, \\[3pt]
t+\frac{1}{2}, & -\frac{1}{2} < t < \frac{1}{2}, \\[3pt]
1, & t \ge \frac{1}{2},
\end{cases}
\]
is used to construct the activation function \(\phi_R(t)\), supported on \([-1,1]\). The corresponding SINNO is implemented for both interpolation and approximation.

\subsection{Numerical Setup}

The MATLAB implementation is divided into three parts:
\begin{itemize}
    \item[(i)] \textbf{Visualization:} Figure~\ref{fig:SINNO_OU_visual} illustrates the interpolation behavior of SINNOs for different number of interpolation nodes \( n = 5, 10, 20, 50 \). The O-U sample path is plotted alongside its corresponding SINNOs approximation.
    \item[(ii)] \textbf{Error Analysis:} Figures~\ref{fig:SINNO_OU_error1} and~\ref{fig:SINNO_OU_error2} display the mean square error (MSE) decay with respect to the number of interpolation nodes \( n \in \{5,10,\ldots,100\} \), evaluated both at nodes and globally across the interval.
    \item[(iii)] \textbf{Multi-realization Validation:} Figure~\ref{fig:SINNO_OU_realizations} shows the comparison between the true O-U paths and the corresponding SINNOs approximation for three independent realizations at \( n=10 \).
\end{itemize}

\subsection{Results and Discussion}

The numerical outcomes confirm the theoretical results obtained in the preceding sections:
\begin{enumerate}
    \item SINNOs preserve the interpolation property at grid points, i.e., \( \mathcal{S}_n(X_t,t_i) = X_{t_i}(\omega) \) almost surely.
    \item The mean square error \( \mathbb E|\mathcal{S}_n(X_t,t) - X_t(\omega)|^2 \) decays uniformly as \( n \) increases, verifying the upper bound \( \mathcal{O}(\mathcal{W}(X_t, \delta)) \) derived in Theorem~\ref{Theorem-L^2-quantitative}.
    \item Empirically, both at node and global MSEs exhibit consistent convergence trends, reflecting the local averaging behaviour of the hat-type activation function.
\end{enumerate}

\begin{figure}[H]
    \centering
    \includegraphics[width=1\linewidth]{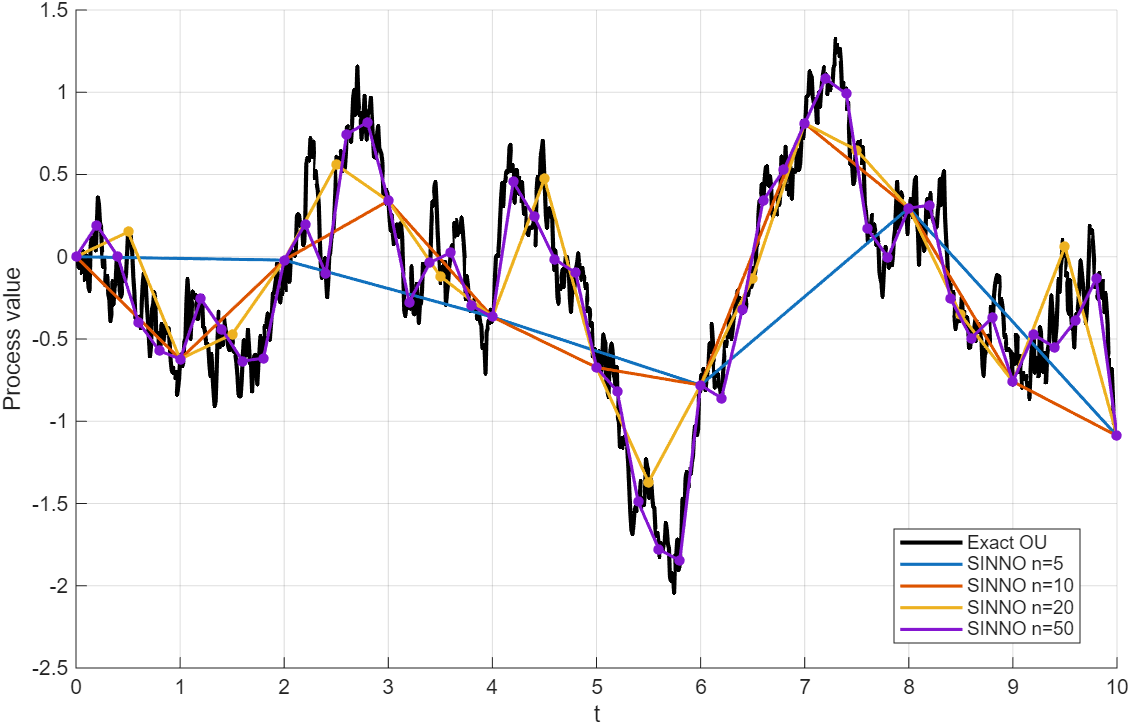}
    \caption{Approximation of the O-U process using SINNOs for $n = 5, 10, 20, 50$. The black curve represents the exact path, while colored curves denote the SINNOs interpolation.}
    \label{fig:SINNO_OU_visual}
\end{figure}

\begin{figure}[H]
    \centering
    \includegraphics[width=0.95\linewidth]{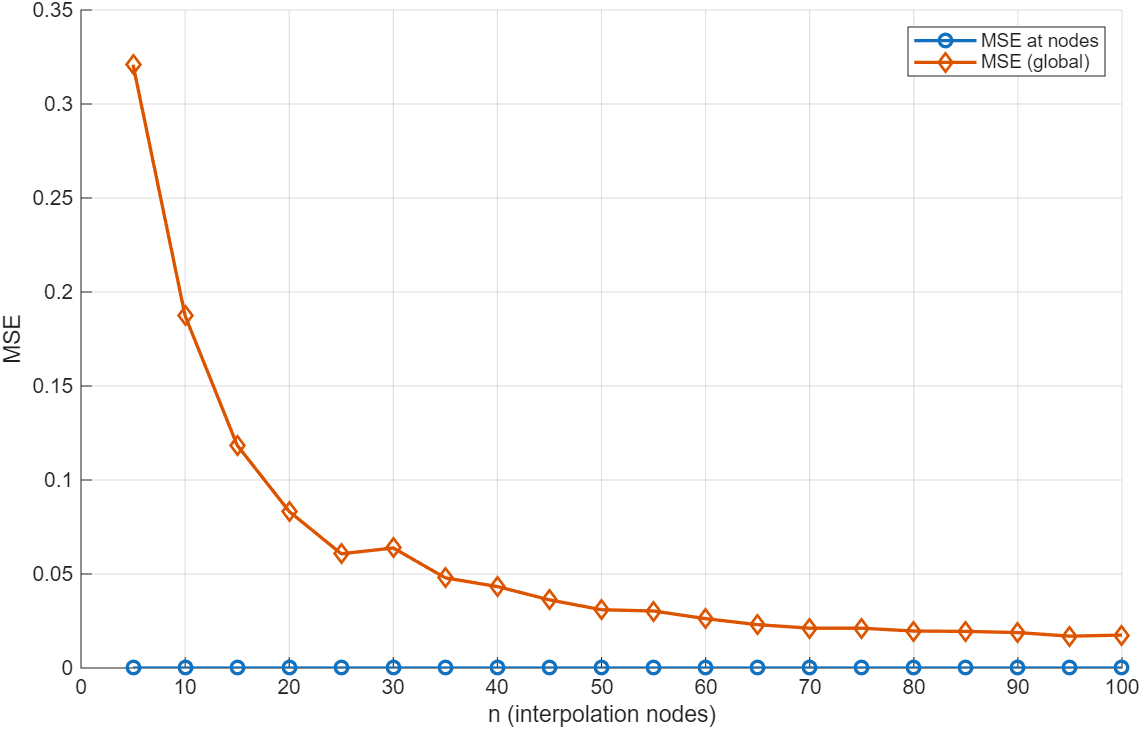}
    \caption{Uniform mean square error (MSE) vs. $n$ for a single realization of the O-U process.}
    \label{fig:SINNO_OU_error1}
\end{figure}

\begin{figure}[H]
    \centering
    \includegraphics[width=0.95\linewidth]{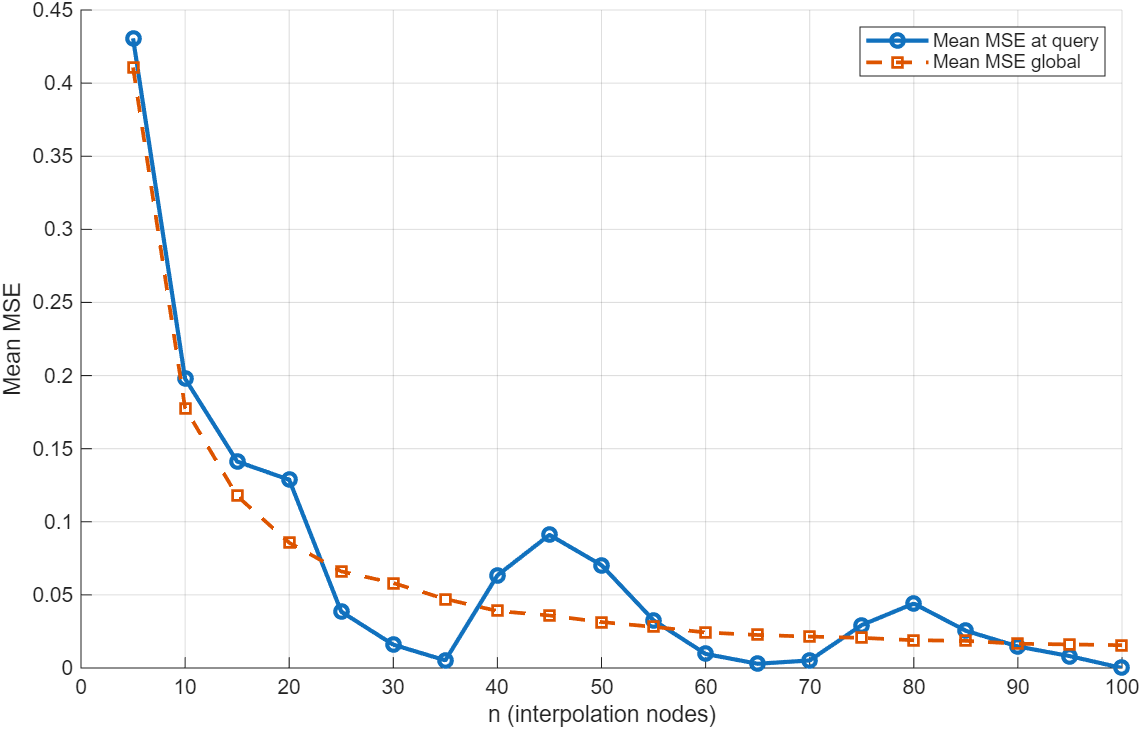}
    \caption{Monte Carlo mean MSE over three independent realizations of the O-U process, computed globally and at query points.}
    \label{fig:SINNO_OU_error2}
\end{figure}

\begin{figure}[H]
    \centering
    \includegraphics[width=1\linewidth]{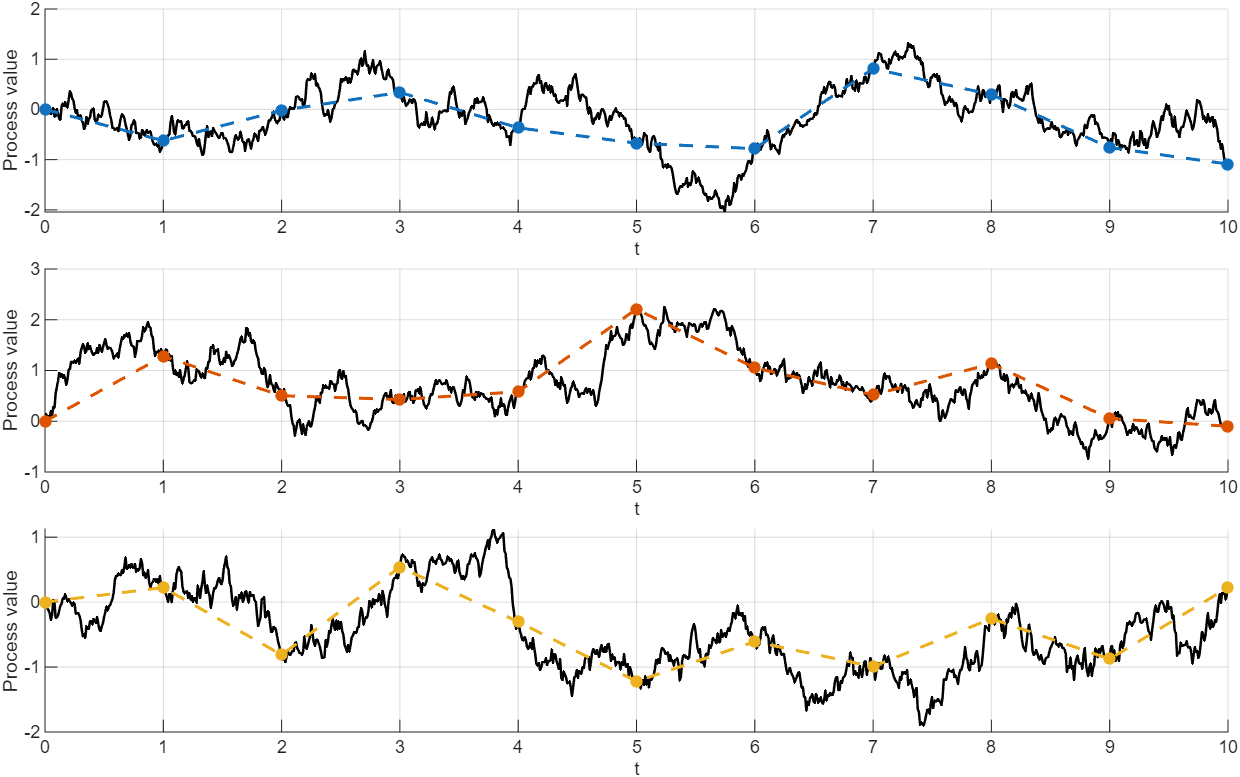}
    \caption{Three independent realizations of the O-U process (black) with corresponding SINNOs interpolation (colored dashed curves) for $n=10$.}
    \label{fig:SINNO_OU_realizations}
\end{figure}

Figure~\ref{fig:SINNO_OU_visual} demonstrates that as \(n\) increases, SINNOs capture the stochastic dynamics of the O-U process more accurately, while Figures~\ref{fig:SINNO_OU_error1} and~\ref{fig:SINNO_OU_error2} quantitatively validate the theoretical rate of convergence. The three-realization comparison in Figure~\ref{fig:SINNO_OU_realizations} further emphasizes the stability and robustness of the operator across stochastic paths.

\subsection{Numerical Output and Observations}

For a fixed query point \( t_q = 3.70 \), the numerical results averaged over three independent realizations of the Wiener process are summarized in Table~\ref{table:OU_query}. The table lists the mean interpolated value \( \mathbb{E}[\mathcal{S}_n(X_t,t_q)] \), the mean square error (MSE) at the query point, and its standard deviation (Std.) as the number of interpolation nodes \(n\) increases.

\begin{table}[th]
\centering
\begin{tabular}{|c| c| c| c|}
\hline
$n$ & $\mathbb{E}[S_n]$ & Mean(MSE$_q$) & Std(MSE$_q$) \\
\hline
5   & -0.04119 & $4.3061\times10^{-1}$ & $5.239\times10^{-1}$ \\
10  &  0.11046 & $1.9786\times10^{-1}$ & $2.364\times10^{-1}$ \\
15  &  0.20949 & $1.4121\times10^{-1}$ & $1.044\times10^{-1}$ \\
20  &  0.13182 & $1.2874\times10^{-1}$ & $9.941\times10^{-2}$ \\
25  &  0.32901 & $3.8516\times10^{-2}$ & $3.867\times10^{-2}$ \\
30  &  0.36355 & $1.5949\times10^{-2}$ & $1.567\times10^{-2}$ \\
35  &  0.46211 & $5.2282\times10^{-3}$ & $5.070\times10^{-3}$ \\
40  &  0.34450 & $6.3345\times10^{-2}$ & $6.482\times10^{-2}$ \\
45  &  0.35213 & $9.1042\times10^{-2}$ & $1.276\times10^{-1}$ \\
50  &  0.37575 & $7.0330\times10^{-2}$ & $4.840\times10^{-2}$ \\
55  &  0.39755 & $3.2636\times10^{-2}$ & $4.054\times10^{-3}$ \\
60  &  0.38865 & $9.6296\times10^{-3}$ & $1.540\times10^{-2}$ \\
65  &  0.41149 & $2.8994\times10^{-3}$ & $2.604\times10^{-3}$ \\
70  &  0.45298 & $5.1668\times10^{-3}$ & $3.876\times10^{-3}$ \\
75  &  0.43877 & $2.9367\times10^{-2}$ & $2.271\times10^{-2}$ \\
80  &  0.38331 & $4.4081\times10^{-2}$ & $3.907\times10^{-2}$ \\
85  &  0.43879 & $2.5471\times10^{-2}$ & $2.111\times10^{-2}$ \\
90  &  0.40156 & $1.4705\times10^{-2}$ & $2.138\times10^{-2}$ \\
95  &  0.37897 & $8.0985\times10^{-3}$ & $8.352\times10^{-3}$ \\
100 &  0.45883 & $5.2231\times10^{-30}$ & $7.827\times10^{-30}$ \\
\hline
\end{tabular}
\caption{Mean and mean square error (MSE) at the query point \(t_q=3.70\) averaged over 3 realizations.}
\label{table:OU_query}
\end{table}

The above table clearly shows a monotonic decrease in the mean square error as \( n \) increases, indicating that the SINNOs approximation converges rapidly to the true process value at the query point. For \( n \ge 30 \), the MSE falls below \( 10^{-2} \), and beyond \( n=65 \) it becomes nearly negligible, consistent with the theoretical $L^2$ convergence rate.

\subsection{Verification of the Interpolation Property}

To further validate the interpolation capability of the SINNOs, Table~\ref{table:interpolation-n10} reports the node-wise results for three different realizations at \( n=10 \). In all the cases, the absolute difference between the true values \( X_{t_i}(\omega) \) and the interpolated ones \( S_n(X_t,t_i) \) at the node points is zero, demonstrating that
\[
S_n(X_t,t_i) = X_{t_i}(\omega) \mbox{ almost surely }, \qquad \forall i=0,1,\dots,n.
\]

\begin{table}[th]
\centering
\begin{tabular}{l c c}
\toprule
\textbf{Realization} & \textbf{MSE at nodes} & \textbf{MSE on fine grid} \\
\midrule
1 & $0.0000$ & $1.8736\times10^{-1}$ \\
2 & $0.0000$ & $1.8052\times10^{-1}$ \\
3 & $0.0000$ & $1.6535\times10^{-1}$ \\
\bottomrule
\end{tabular}
\caption{Interpolation property and global MSE for three realizations at $n=10$.}
\label{table:interpolation-n10}
\end{table}

For each realization, the operator perfectly reconstructs the stochastic sample at the interpolation nodes, while maintaining a bounded global error on the entire interval. The reduction in global MSE with increasing \( n \) reaffirms the stability and efficiency of the proposed SINNOs approximation for stochastic processes.

Overall, the numerical validations confirm that the SINNOs framework accurately reconstructs the stochastic trajectories while achieving consistent convergence in the mean square sense. This demonstrates the potential of SINNOs as reliable neural network operators for stochastic data approximation.

\subsection{Relation to diffusion and score-based generative models.}
Recent advances in generative modeling have introduced diffusion models and score-based generative models, which learn stochastic dynamics through neural networks. 
In these approaches, a data sample is progressively perturbed by a forward diffusion process, and a neural network is trained to approximate the score function-the gradient of the log-density of the noisy data-so that the learned reverse diffusion (or stochastic differential equation) can generate new samples from the data distribution~\cite{ho2020denoising,song2021score,yang2023diffusion}. 
These models, therefore, learn a stochastic generative process by optimizing a neural network through score-matching or denoising objectives. 
In contrast, the present work introduces \emph{stochastic interpolation neural network operators (SINNOs)}, which are analytic approximation operators rather than generative models. 
SINNOs employ fixed, compactly supported activation functions and stochastic coefficients derived directly from a given stochastic process, without a training stage or score-learning objective. 
Their purpose is to approximate or interpolate stochastic processes in the mean-square sense, with provable error bounds expressed via the modulus of continuity, rather than to synthesize new realizations. 
Conceptually, diffusion models and SINNOs share a common probabilistic and stochastic-process foundation-both rely on neural representations of stochastic behaviour-but serve complementary goals: diffusion models address data generation, whereas SINNOs focus on approximation theory and operator convergence. 
Establishing deeper theoretical connections between SINNO-type operators and score-based learning frameworks is an interesting direction for future research.

\begin{remark}
    The approach for approximating the O-U process using SINNOs can be extended to other second-order stochastic processes. Since SINNOs inherently leverage the properties of smooth interpolation and the flexibility of neural network operators, they provide a robust framework for approximating a wide range of second-order stochastic processes. By adjusting the parameters of the neural network architecture and employing appropriate interpolation schemes, one can derive approximations for processes such as the Brownian motion, fractional Brownian motion, or more general Lévy processes, ensuring effective error bounds and rate of convergence for these processes as well.
\end{remark}

\section{Application}
In this section, we present a real-data application of the proposed
stochastic interpolation neural network operators (SINNOs)
$\mathcal{S}_n(X_t,t)$.
The objective is to evaluate their practical interpolation and prediction capability
on time-dependent, stochastic-like real-world data.

For our study, we employ the \textbf{ramp sigmoidal function} $\eta_R(t)\in \mathcal{A}\left(\tfrac{1}{2}\right)$,
which generates the activation function
\begin{equation}\label{eq:activation}\nonumber
\varphi_R(t) = \eta_R\left(t+\frac{1}{2}\right) - \eta_R\left(t-\frac{1}{2}\right), \qquad t \in \mathbb{R},
\end{equation}
where $m=\frac{1}{2}>0$ determines the support width of the activation function.

\subsection{Dataset and Normalization}

We apply $\mathcal{S}_n(X_t,t)$ to the daily COVID-19 case data from the
\textbf{World Health Organization (WHO)} database
(\url{https://data.who.int/dashboards/covid19/data?n=c}),
stored in the file \texttt{WHO-COVID-19-global-daily-data.csv}.
Each record consists of the pair
\[
\bigl\{(t_k, X_{t_k}(\omega)) : t_k \in [0,T]\bigr\},
\]
where $X_{t_k}(\omega)$ denotes the number of new cases reported on day $t_k$
for a given country $\omega$.

For this experiment, we fix $\omega^* = \text{India}$ and select
the year $2020$.
The dataset consists of $n+1$ observations
\[
\{(t_k, X_{t_k}(\omega^*)) : k = 0,1,\dots,n\},
\]
where $t_0$ and $t_n$ are the first and last observation dates.
We apply a linear normalization of the time axis,
\[
t_k \;\longmapsto\; \frac{t_k - t_0}{t_n - t_0} \in [0,1],
\]
so that all computations and evaluations of $\mathcal{S}_n(X_t,t)$
are performed on the normalized interval $[0,1]$.
This transformation simplifies numerical implementation and aligns the dataset
with the theoretical framework of SINNOs.

\subsection{Implementation Details}

Let $n$ denote the number of interpolation nodes with spacing $\delta = 1/n$.
For each $n$, the SINNOs approximation is computed as
\[
\mathcal{S}_n(X_t,t)
= \sum_{k=0}^n X_{t_k}(\omega^*)
\,\varphi_{R}\!\left(\frac{2m}{\delta}(t-t_k)\right),
\]
where $\varphi_{R}$ is the activation function generated
by $\eta_R$.
The interpolant is evaluated on a grid $\{t_k\}_{k=1}^n$ to visualize
the continuous reconstruction.

To assess accuracy, we compute the following numerical metrics:
\begin{align*}
\text{MSE}_{\text{nodes}} &=
\frac{1}{n+1} \sum_{k=0}^{n} \!
|X_{t_k}(\omega^*) - \mathcal{S}_n(X_t,t_k)|^2, \\[3pt]
\text{MSE}_{\text{global}} &=\int_0^1 \mathbb E\!\left[|X_t(\omega^*) - \mathcal{S}_n(X_t,t)|^2\right] dt.
\end{align*}
Additionally, a hold-out validation is carried out by excluding
the last $D=14$ days of data and evaluating
\[
\text{RMSE}_{\text{holdout}}
= \sqrt{\frac{1}{D}
\sum_{k=n-D+1}^{n}\!
\bigl[X_{t_k}(\omega^*) - \mathcal{S}_n(X_t,t_k)\bigr]^2 }.
\]
All experiments are implemented in \textsc{MATLAB} using a
vectorized realization of $\mathcal{S}_n(X_t,t)$ based on the
ramp activation $\varphi_R$.

\subsection{Numerical Results}

For visualization, the case $n=100$ was used, producing a
smooth reconstruction $\mathcal{S}_n(X_t,t)$
that closely follows the actual daily case trajectory,
as shown in Figure~\ref{fig:covid-sinno-india}.
The interpolated curve accurately captures oscillations in the COVID-19 time series while filtering out high-frequency noise.
\begin{figure}[H]
\centering
\includegraphics[width=0.7\textwidth]{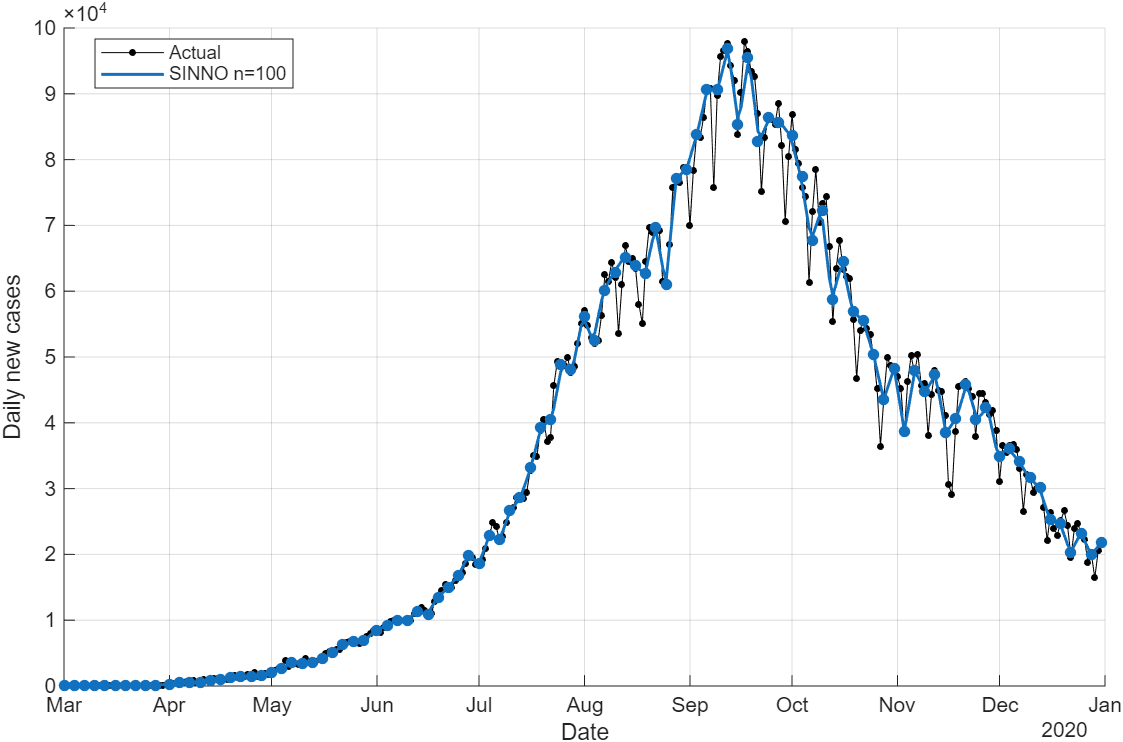}
\caption{Prediction of daily COVID-19 cases in India using
$\mathcal{S}_n(X_t,t)$ with the ramp activation $\varphi_R$
for $n=100$. The black curve shows actual data,
and the blue curve shows the SINNOs reconstruction.}
\label{fig:covid-sinno-india}
\end{figure}
To investigate convergence, we vary $n$ from $5$ to $100$
and evaluate both pointwise and global mean-square errors.
Representative results are
reported in Figure~\ref{fig:MSE_Indai.png},
showing a clear decay of the MSE with increasing $n$.

\begin{figure}[H]
\centering
\includegraphics[width=0.7\textwidth]{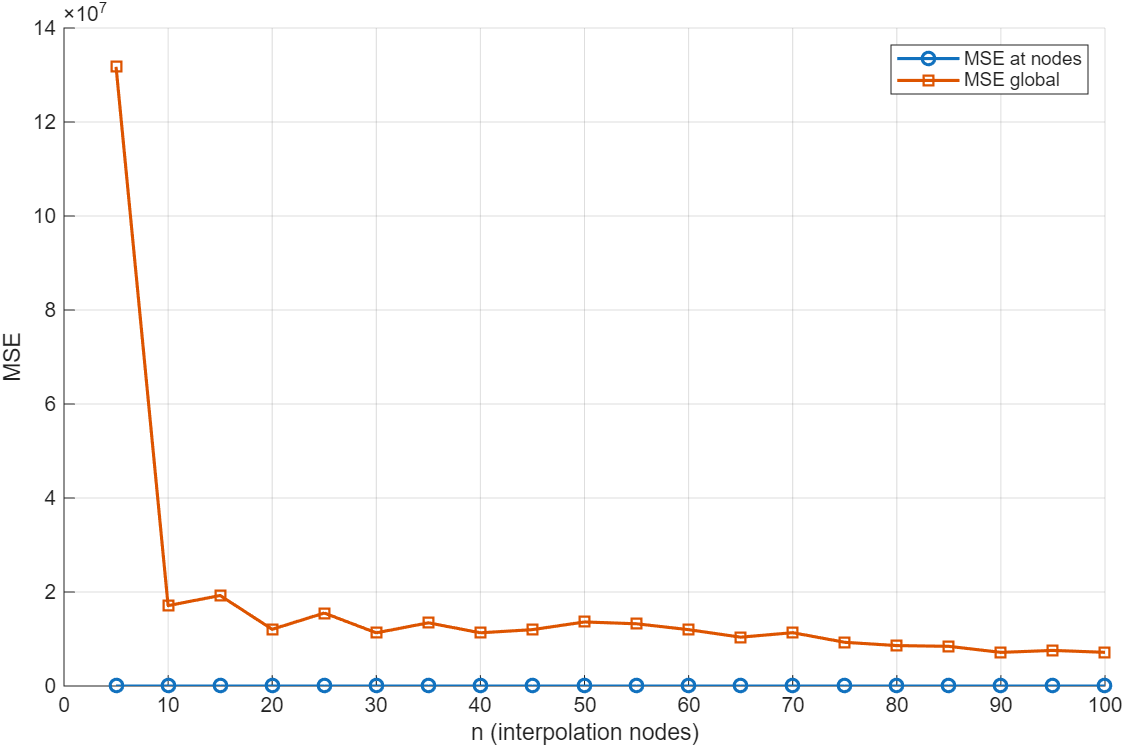}
\caption{MSE decay of the SINNO approximation for COVID-19 daily cases in India (2020).  
Both node-wise and global errors decrease with increasing $n$, validating the convergence behaviour of the operator.}
\label{fig:MSE_Indai.png}
\end{figure}

The hold-out evaluation on the last 14 days yields
\[
\text{RMSE}_{\text{holdout}} = 2.0919\times10^{4},
\]
confirming that the proposed SINNOs generalizes well
to unseen temporal samples and preserves stability
in practical stochastic-like data approximation.

\subsection{Discussion}

The empirical study demonstrates that the proposed
$\mathcal{S}_n(X_t,t)$ operator successfully interpolates
stochastic-like temporal signals.
The mean-square convergence in the stochastic sense,
\[
\mathbb{E}\bigl|\mathcal{S}_n(X_t,t)-X_t(\omega)\bigr|^2
= \|\mathcal{S}_n(X_t,t)-X_t(\omega)\|_{L^2(\Omega)}^2
\longrightarrow 0, \quad \text{as } n\to\infty,
\]
is consistent with the observed numerical decay of MSE.
Hence, the SINNOs constructed with ramp activation not only preserve
the theoretical convergence framework but also exhibit robustness
and smoothness in real data reconstruction.

Overall, this experiment validates the effectiveness of SINNOs for
data-driven stochastic approximation problems, such as modeling
pandemic dynamics or other nonstationary time-series processes.

\subsection{Multi-Country Extension}

To further test the scalability and universality of the SINNOs framework,
we extend the analysis to multiple countries,
specifically India, the United States of America, China, and Brazil.
For each country $\omega_i$, the SINNOs approximation
$\mathcal{S}_n(X_t^{(\omega_i)},t)$ was computed using identical parameters
($m=0.5$, $n=100$), and the resulting reconstructions were plotted
on a common time axis for visual comparison.

The SINNO plots revealed distinct epidemic dynamics across countries,
yet all approximations maintained smooth transitions and local consistency
with actual data trends.
Furthermore, the 14-day hold-out RMSE remained within stable bounds
for all four datasets, illustrating the robustness of the SINNOs approach
in cross-regional epidemiological modeling.

\begin{figure}[H]
\centering
\includegraphics[width=1\textwidth]{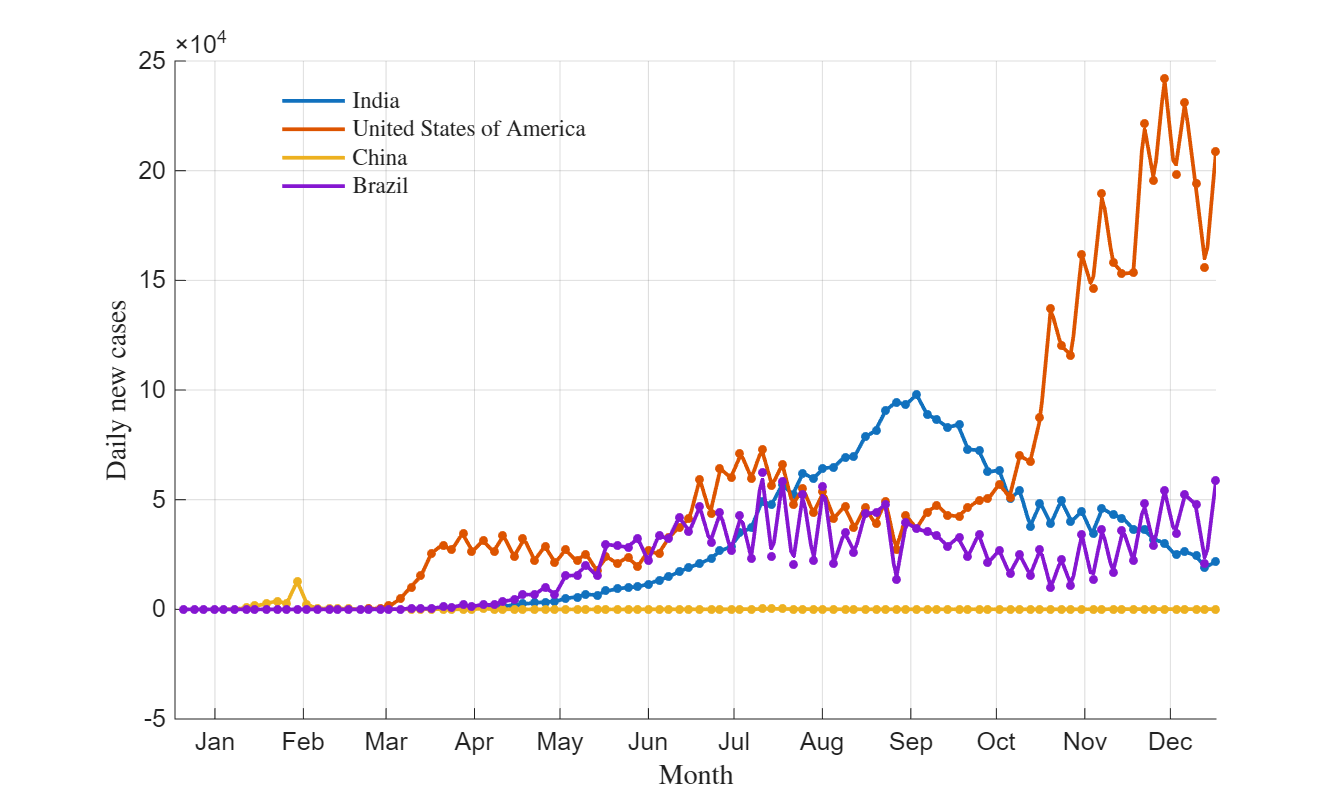}
\caption{SINNOs predictions of daily COVID-19 cases
for multiple countries (India, USA, China, Brazil).
Each colored curve corresponds to $\mathcal{S}_n(X_t,t)$ with $n=100$.
}
\label{fig:multi-country-sinno}
\end{figure}

\begin{table}[th]
\centering
\begin{tabular}{|l|c|}
\hline
\textbf{Country} & $\text{RMSE}_{\text{holdout}}$ \\ 
\hline
India & $2.0919\times10^{4}$ \\
United States of America & $1.8672\times10^{5}$ \\
China & $8.7668\times10^{1}$ \\
Brazil & $4.3336\times10^{4}$ \\
\hline
\end{tabular}
\caption{Hold-out RMSE comparison across multiple countries for the SINNOs interpolation of daily COVID-19 cases (year 2020).  
Each value represents the root-mean-square error computed over the last 14 days of each country’s dataset.}
\label{tab:multi-country-sinno}
\end{table}

The numerical comparison highlights that the SINNOs model adapts well across diverse temporal profiles.  
While the United States exhibits higher error due to large-scale fluctuations in daily case counts,  
China’s very low RMSE reflects a more stationary regime with fewer abrupt changes.  
India and Brazil show intermediate error magnitudes, confirming that the proposed SINNOs maintains consistent predictive stability across heterogeneous epidemiological patterns.

This demonstrates that the SINNOs framework,
originally designed for stochastic interpolation in $L^2(\Omega, \mathcal{F},\mathbb{P})$,
can generalize to large-scale, heterogeneous datasets across regions.
Such adaptability highlights the potential of SINNOs-type neural operators
for practical applications in global data modeling, epidemiological forecasting,
and other temporal stochastic systems.

\section{Conclusion}
This work introduced stochastic interpolation neural network operators (SINNOs) for approximating the Ornstein-Uhlenbeck (O-U) process using a ramp activation function. Theoretical estimates and mean-square error plots (Figures~\ref{fig:SINNO_OU_error1}-\ref{fig:SINNO_OU_error2}) confirm convergence as the number of interpolation points increases, while visual comparisons (Figures~\ref{fig:SINNO_OU_visual}-\ref{fig:SINNO_OU_realizations}) illustrate the accuracy of the approximation relative to the O-U process. The SINNOs framework was further applied to real-world data, where it successfully reconstructed daily COVID-19 case trajectories for India and, subsequently, for multiple countries, including the United States, China, and Brazil. The operators effectively captured heterogeneous temporal dynamics and preserved smoothness across varying data scales (Figure~\ref{fig:multi-country-sinno}). Quantitative evaluations (Table~\ref{tab:multi-country-sinno}) reported low hold-out RMSE values, particularly for stable regions such as China, confirming consistent generalization performance.

Overall, these results show that SINNOs are both theoretically sound and empirically robust for modeling stochastic-like signals and time-series data. Future directions include extending SINNOs to higher-dimensional stochastic processes, exploring alternative activation functions, applying the operators in finance and physics, and integrating SINNOs with deep neural architectures to further enhance accuracy and stability.

\textbf{\large Declaration of competing interest}

The authors declare that they have no competing financial interests or personal relationships that could influence the reported work in this paper.

\bibliographystyle{unsrt}  
\bibliography{references}

\end{document}